\let\cite\citep
\newif\ifuniqueAffiliation
\newbox{\orcid}\sbox{\orcid}{\includegraphics[scale=0.06]{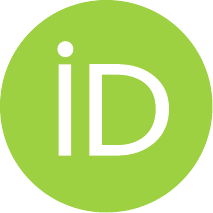}} 
\newtheorem{theorem}{Theorem}
\newtheorem{proposition}[theorem]{Proposition}
\newtheorem{definition}{Definition}
\newcommand{\bx}{\boldsymbol{x}}
\newcommand{\by}{\boldsymbol{y}}
\newcommand{\bK}{\boldsymbol{K}}
\newcommand{\bZ}{\boldsymbol{Z}}
\newcommand{\bG}{\boldsymbol{G}}
\newcommand{\bGR}{\boldsymbol{G}_R}
\newcommand{\balpha}{{\boldsymbol{\alpha}}}
\newacronym{ML}{ML}{machine learning}
\newacronym{DL}{DL}{deep learning}
\newacronym{RL}{RL}{representation learning}
\newacronym{RKHS}{RKHS}{Reproducing Kernel Hilbert Space}
\newacronym{MnL}{MnL}{manifold learning}
\newacronym{AE}{AE}{autoencoder}
\newacronym{KE}{KE}{Kernel Embedding}
\newacronym{PSD}{PSD}{Postitive Semidefinite}
\title{Learning Reconstructive Embeddings in Reproducing Kernel Hilbert Spaces via the Representer Theorem \thanks{This work has been submitted to the IEEE for possible publication}}
\date{}
\author[1]{%
    \href{https://orcid.org/0009-0005-5068-3166}{\usebox{\orcid}\hspace{1mm}Enrique Feito-Casares\thanks{\texttt{enrique.feito@urjc.es}}}%
}
\author[1]{%
    \href{https://orcid.org/0000-0001-6916-6082}{\usebox{\orcid}\hspace{1mm}Francisco M. Melgarejo-Meseguer}%
}
\author[1,2]{%
    \href{https://orcid.org/0000-0003-0426-8912}{\usebox{\orcid}\hspace{1mm}José-Luis Rojo-Álvarez}%
}
\affil[1]{Departamento de Teoría de la Señal y Comunicaciones y Sistemas Telemáticos y Computación, \protect\\ Universidad Rey Juan Carlos, Madrid, Spain.}
\affil[2]{Centro de Investigación for Data, Complex Networks and Cybersecurity Sciences,  \protect\\ Universidad Rey Juan Carlos, Madrid, Spain.}
\begin{document}
\maketitle

\begin{abstract}
Motivated by the growing interest in representation learning approaches that uncover the latent structure of high-dimensional data, this work proposes new algorithms for reconstruction-based manifold learning within Reproducing-Kernel Hilbert Spaces (RKHS). Each observation is first reconstructed as a linear combination of the other samples in the RKHS, by optimizing a vector form of the Representer Theorem for their autorepresentation property. A separable operator-valued kernel extends the formulation to vector-valued data while retaining the simplicity of a single scalar similarity function. A subsequent kernel-alignment task projects the data into a lower-dimensional latent space whose Gram matrix aims to match the high-dimensional reconstruction kernel, thus transferring the auto-reconstruction geometry of the RKHS to the embedding. Therefore, the proposed algorithms represent an extended approach to the autorepresentation property, exhibited by many natural data, by using and adapting well-known results of Kernel Learning Theory. Numerical experiments on both simulated (concentric circles and swiss-roll) and real (cancer molecular activity and IoT network intrusions) datasets provide empirical evidence of the practical effectiveness of the proposed approach.
\end{abstract}

\keywords{Reproducing Kernel Hilbert Space \and Representer Theorem \and Manifold Learning \and Kernel Alignment \and Dimension Reduction}


\section{Introduction}

As experienced by any practitioner, data representation is critical to the application of \ac{ML}, whatever the targeted task, supervised or unsupervised. In this sense, recent efforts have focused their attention on \ac{RL} as an effective way to structure knowledge, enabling models to learn latent representations that capture the underlying patterns in the data and facilitate generalization. This trend has gained momentum with the rise of deep learning and unsupervised feature learning, where representations learned directly from raw data have demonstrated superior performance across a wide range of tasks \cite{Bengio2013}. Among the various approaches for generating representations, \ac{MnL} has attracted increasing attention as a powerful approach to \ac{RL} for producing compact, low-dimensional encodings of high-dimensional inputs. \ac{MnL} relies on the assumption that data lie on a lower-dimensional manifold embedded within a higher-dimensional space, and aims to uncover this latent structure by preserving the intrinsic geometry of the data. This approach provides meaningful representations that can be used to prepare data for downstream machine learning tasks.

Recent studies have highlighted the impact of incorporating \ac{MnL} techniques into both \ac{ML} and \ac{DL} pipelines to address a variety of domain-specific challenges. In biomedical applications, \ac{MnL} has been used for stress detection~\cite{Bodaghi2024}, electrocardiographic signal assessment~\cite{Sanchez-Carballo2025}, and ambient health monitoring in smart homes~\cite{Melgarejo-Meseguer2024a}. In engineering domains, it supports structural damage detection~\cite{Entezami2025}, aerodynamic design optimization~\cite{Ma2024}, and cybersecurity in critical systems~\cite{Feijoo-Martinez2023}. Moreover, the widespread use of \ac{MnL} for data visualization has been well documented, with techniques such as Student’s t-distributed Stochastic Neighbor Embedding (t-SNE) \cite{vanderMaaten2008} and Uniform Manifold Approximation and Projection (UMAP) \cite{McInnes2018a} facilitating the exploration and interpretation of data structures in lower-dimensional spaces \cite{Ali2019a,Li2024}.

Whether previous \ac{MnL} techniques tend to preserve distances, neighbourhood probabilities, and local linearity, the present work instead centers dimensionality reduction on the reconstruction geometry within a \ac{RKHS}, so the resulting embedding is shaped by data samples reconstruction ability rather than distance alone. To make this idea work for vector-valued data, the study introduces a separable operator-valued kernel that applies a single scalar kernel function uniformly across every output dimension. Finally, a kernel-alignment objective ties the high-dimensional reconstruction relationships to the low-dimensional embedding, ensuring that the learned latent space mirrors the structure uncovered in the original feature space. The contributions of the present work can be summarized as follows: (i) a new autoreconstructive criterion for manifold learning in RKHS, and (ii) an embedding mechanism that aligns reconstruction geometry with latent similarity. Both contributions are supported by numerical experiments on various datasets, including kernelized input data.

The paper is structured as follows. Section \ref{sec:related_work} reviews existing dimensionality reduction techniques to position the present work within this landscape. Sections \ref{sec:background_and_preliminaries} and \ref{sec:methodology} introduce the mathematical foundations and describe the proposed algorithm. Section \ref{sec:numerical_experiments} presents quantitative and qualitative results comparing our method to established baselines. Finally, Section \ref{sec:discussion_and_conclussion} discusses the results and outlines directions for future research.

\section{Related Work}
\label{sec:related_work}

In the context of the present proposal, several related works share a similar conceptual approach, particularly concerning dimensionality reduction tools and, more specifically, the use of Kernel Theory.

Dimensionality reduction techniques for high-dimensional data can be broadly classified into three paradigms, namely, spectral methods, probabilistic methods, and neural network-based methods \cite{Ghojogh2023a}. Each paradigm provides valuable insights into the latent structure of data, but also presents limitations that motivate our approach. Spectral methods, such as Principal Component Analysis (PCA) \cite{Jolliffe1986}, Fisher Discriminant Analysis (FDA) \cite{Hastie2009}, and their kernelized variants (e.g., KPCA \cite{Scholkopf1997} and KFDA \cite{Mika1999}), extract key directions through generalized eigenvalue problems. Probabilistic approaches, Probabilistic PCA (PPCA) \cite{Roweis1997}, SNE \cite{Hinton2002}, and UMAP \cite{McInnes2018a} provide uncertainty quantification and capture complex nonlinear relationships. However, these methods are limited by intractable inference problems and restrictive noise assumptions and rarely yield explicit mappings. Neural network-based techniques, particularly Autoencoders (AEs), have proven effective in learning rich, nonlinear representations. However, while they can achieve state-of-the-art performance, their latent representations often lack transparency, making them true black-box models. In line with the proposed approach, recent attempts have focused on kernelizing autoencoder criteria, initially in shallow autoencoders \cite{Gholami2016a}, and then extending this theoretical analysis to learning low-dimensional latent spaces in vector-valued \ac{RKHS}. This generalizes the kernelization criterion of autoencoders to an arbitrary number of layers, using a reconstruction criterion defined on the input data space, in which the dimensionality reduction occurs at the bottleneck defined by the neural network \cite{Laforgue2018,Laforgue2019a}.

The criterion for reconstructing data samples based on neighbors was originally proposed under the Local Linear Embedding (LLE) algorithm \cite{Roweis2000a}. Subsequent efforts led to the development of a kernelized variant (KLLE) \cite{Zhao2012a}, which performs reconstruction in \ac{RKHS}. While this method shares similarities with our approach, here instead, is considered self-reconstruction for the embedded data points, thus also focusing on the input space reconstruction simultaneously.

Kernel Dimensionality Reduction (KDR) \cite{Fukumizu2003} is a family of methods within the Sufficient Dimensionality Reduction (SDR) paradigm \cite{Li1991}. Despite the similarity in name, KDR is primarily a supervised statistical technique designed to reduce the covariance of data in the \ac{RKHS} and project it into a lower-dimensional space. An unsupervised variant has also been proposed \cite{Wang2010}; however, our methodology differs fundamentally from all existing variants of KDR.

\section{Background on Dimensionality Reduction and Kernel Methods}
\label{sec:background_and_preliminaries}

In this section, we first outline the theoretical foundations of dimensionality reduction, introducing the manifold hypothesis as the central motivation for many nonlinear approaches. We then present the basic concepts of kernel methods, including kernel functions, \ac{PSD} kernels, and feature maps, before extending them to operator-valued kernels. Building on these notions, we formalize the framework of \ac{RKHS} in both scalar and vector-valued settings, together with their associated norms and representer theorems. Furthermore, an experienced practitioner with kernel methods can choose to skip this section or complement it with general references on the subject~\cite{Scholkopf2002,Rojo-Alvarez2018c}.

Within the dimensionality reduction methodology, it is assumed that each feature of data points does not carry the same amount of information, meaning that the $D$-dimensional data points do not cover the entire $D$-dimensional Euclidean space ($\mathbb{R}^D$) \cite{Ghojogh2023a}. Whether this hypothesis is true must be proven; however, since data usually comes from physical measurements and underlying elementary explanations might be available, this statement is often formalized under the manifold hypothesis, which motivates learning algorithms that exploit this latent structure.
\begin{definition}[Manifold Hypothesis \cite{Fefferman2016}]
The manifold hypothesis asserts a set of high-dimensional data $\{\bx_i\}_{i=1}^n \subset \mathbb{R}^D$ can lie approximately on a smooth, low-dimensional manifold $\mathcal{M} \subset \mathbb{R}^D$ of intrinsic dimension $d \ll D$. 
\end{definition}
A majority of the introduced \ac{MnL} algorithms specifically aim to identify non-linear manifolds within the data. In this context, kernel methods \cite{Scholkopf2002} could provide a valuable complement, offering a principled mathematical framework for learning nonlinear relationships by embedding data into high-dimensional or even infinite-dimensional Hilbert spaces. This methodological framework relies on the concept of kernel function, which can be informally understood as a similarity measure between pairs of inputs. Formally, a kernel function is defined as follows:
\begin{definition}[Kernel Function]
A kernel function is a function $k: \mathcal{X} \times \mathcal{X} \to \mathbb{R}$ that quantifies similarity between elements $\bx, \bx' \in \mathcal{X}$. In the context of machine learning, $\mathcal{X}$ is typically a subset of $\mathbb{R}^D$, and $k(\bx, \bx')$ expresses how closely related $\bx$ and $\bx'$ are, according to some fixed notion of similarity.
\end{definition}
Not all kernel functions are suitable for constructing inner product spaces. For this, we require additional structure in terms of symmetry and \ac{PSD}. These properties ensure that a given kernel corresponds to an inner product in a Hilbert space.
\begin{definition}[Positive Semi-Definite Kernel]
A kernel function $k: \mathcal{X} \times \mathcal{X} \to \mathbb{R}$ is called \ac{PSD} if it is symmetric (i.e., $k(\bx,\bx') = k(\bx', \bx)$ for all $\bx,\bx' \in \mathcal{X}$), and for every finite subset $\{\bx_1, \dots, \bx_n\} \subset \mathcal{X}$, the corresponding kernel matrix $\bK \in \mathbb{R}^{n \times n}$, defined by
\begin{equation}
\bK_{ij} = k(\bx_i, \bx_j),
\end{equation}
is \ac{PSD}, i.e., $\boldsymbol{v}^\top \bK \boldsymbol{v} \geq 0$ for all $\boldsymbol{v} \in \mathbb{R}^n$.
\end{definition}
The abstraction provided by kernel functions when employed on a given dataset is materialized in matrix form as a kernel matrix (or Gram matrix) encoding inner products between mapped data points.
An alternative approach to \ac{PSD} kernels is through a feature map, which expresses each kernel as an inner product in an associated Hilbert space.
\begin{definition}[Feature Map]
Given a kernel function $k$, a feature map is a function $\phi: \mathcal{X} \to \mathcal{H}$ into a Hilbert space $\mathcal{H}$ such that
\begin{equation}
\bK(\bx, \bx') = \langle \phi(\bx), \phi(\bx') \rangle_{\mathcal{H}} \quad \text{for all } \bx, \bx' \in \mathcal{X}.
\end{equation}
\end{definition}
In the multi-output regression cases, where the relationship between inputs and outputs is more complex and better modeled through operators acting on a Hilbert space \cite{Micchelli2005}. The kernel methodology paradigm is equivalently defined for vector-valued inputs and outputs.
\begin{definition}[Operator-Valued Kernel Function]
An operator-valued kernel function $ k: \mathcal{X} \times \mathcal{X} \to \mathcal{L}(\mathcal{H}) $ is a function that quantifies the similarity between elements $ \bx, \bx' \in \mathcal{X} $ in terms of operators. Here, $ \mathcal{X} \subseteq \mathbb{R}^D $, and the kernel function returns an operator $ k(\bx, \bx') $ in a Hilbert space $ \mathcal{H} $, typically a bounded linear operator in $ \mathcal{L}(\mathcal{H}) $, which represents how closely related the inputs $\bx$ and $ \bx' $ are, based on a fixed notion of similarity.
\end{definition}
\begin{definition}[Operator-Valued Kernel Matrix]
Given a dataset $ \{\bx_1, \dots, \bx_n\} \subset \mathcal{X} $ and an operator-valued kernel function $ k $, the kernel matrix $ \bK \in \mathcal{L}(\mathcal{H})^{n \times n} $ is defined by $ \bK_{ij} = k(\bx_i, \bx_j) $.
\end{definition}
Given the provided definitions, an RKHS formalizes the connection between kernels, feature maps, and function spaces and provides a rich and rigorous setting for both analysis and algorithm design. First, the classical scalar-valued case is presented before generalizing to vector-valued outputs.
\begin{definition}[Scalar-Valued RKHS]
Let $k: \mathcal{X} \times \mathcal{X} \to \mathbb{R}$ be a \ac{PSD} kernel. The \ac{RKHS} $\mathcal{H}_k$ associated with $k$ is a Hilbert space of functions $f: \mathcal{X} \to \mathbb{R}$ such that:
\begin{enumerate}
    \item For each $\bx \in \mathcal{X}$, the function $k(\bx, \cdot)$ belongs to $\mathcal{H}_k$.
    \item The reproducing property holds:
    \begin{equation}
    f(\bx) = \langle f, k(\bx, \cdot) \rangle_{\mathcal{H}_k}, \quad \forall f \in \mathcal{H}_k, \; \bx \in \mathcal{X}.
    \end{equation}
\end{enumerate}
\end{definition}
In many applications, the learning task involves predicting structured or vector-valued outputs (e.g., $\mathbb{R}^m$-valued responses). To accommodate these settings, the scalar RKHS framework must be extended to handle vector-valued functions. This leads to the theory of vector-valued RKHSs \cite{Senkene1973}.
\begin{definition}[Vector-Valued RKHS]
Let $\mathcal{Y}$ be a Hilbert space, typically $\mathbb{R}^m$. A vector-valued RKHS $\mathcal{H}_K$ is a Hilbert space of functions $f: \mathcal{X} \to \mathcal{Y}$ associated with an operator-valued kernel $K: \mathcal{X} \times \mathcal{X} \to \mathcal{L}(\mathcal{Y})$, such that:
\begin{enumerate}
    \item For each $\bx \in \mathcal{X}$ and $\by \in \mathcal{Y}$, the function $K(\bx, \cdot)\by$ belongs to $\mathcal{H}_K$.
    \item The reproducing property holds:
    \begin{equation}
    \langle f(\bx), \by \rangle_{\mathcal{Y}} = \langle f, K(\bx, \cdot)\by \rangle_{\mathcal{H}_K}, \quad \forall f \in \mathcal{H}_K, \; \by \in \mathcal{Y}.
    \end{equation}
\end{enumerate}
\end{definition}
Both scalar and vector-valued RKHSs admit a natural norm structure that plays a critical role in regularization and model complexity control. These norms are derived from the inner product structure and, in practice, appear in closed form when functions are expressed in terms of kernel evaluations.
\begin{definition}[RKHS Norms]
Let $f \in \mathcal{H}_k$ be a scalar-valued function represented as
\begin{equation}
f(\bx) = \sum_{i=1}^n \alpha_i k(\bx_i, \bx), \quad \alpha_i \in \mathbb{R},
\end{equation}
Then the RKHS norm is given by
\begin{equation}
\|f\|_{\mathcal{H}_k}^2 = \sum_{i,j=1}^n \alpha_i \alpha_j k(\bx_i, \bx_j).
\end{equation}
In the vector-valued case, if $f(\bx) = \sum_{i=1}^n k(\bx_i, \bx)\alpha_i$ with $\alpha_i \in \mathcal{Y}$, the norm in $\mathcal{H}_K$ is given by
\begin{equation}
\|f\|_{\mathcal{H}_K}^2 = \sum_{i,j=1}^n \langle \alpha_i, K(\bx_i, \bx_j) \alpha_j \rangle_{\mathcal{Y}}.
\end{equation}
\end{definition}
The following theorem states that the solution to a wide class of optimization problems in a scalar-valued RKHS lies in the finite-dimensional span of kernel evaluations at the training points. This result is the foundation of kernel methods in supervised learning.
\begin{theorem}[Representer Theorem - Scalar-Valued Case]
Let $\mathcal{H}_k$ be a scalar-valued RKHS with kernel $k: \mathcal{X} \times \mathcal{X} \to \mathbb{R}$. Given training data $(\bx_i, y_i) \in \mathcal{X} \times \mathbb{R}$ for $i=1, \dots, n$, the solution to the regularized empirical risk minimization problem
\begin{equation}
\min_{f \in \mathcal{H}_k} \sum_{i=1}^n \mathcal{L}(y_i, f(\bx_i)) + \lambda \|f\|_{\mathcal{H}_k}^2
\end{equation}
admits a representation of the form
\begin{equation}
f(\bx) = \sum_{i=1}^n \alpha_i k(\bx_i, \bx), \quad \alpha_i \in \mathbb{R}.
\end{equation}
\end{theorem}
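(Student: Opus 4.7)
The plan is to follow the classical orthogonal-decomposition argument originally due to Kimeldorf and Wahba. First I would introduce the finite-dimensional subspace
\[
V = \mathrm{span}\{k(\bx_i,\cdot) : i=1,\dots,n\} \subset \mathcal{H}_k,
\]
which is closed because it is finite-dimensional. Since $\mathcal{H}_k$ is a Hilbert space, any candidate minimizer $f \in \mathcal{H}_k$ admits a unique orthogonal decomposition $f = f_\parallel + f_\perp$, with $f_\parallel \in V$ and $f_\perp \in V^\perp$. Writing $f_\parallel = \sum_{i=1}^n \alpha_i k(\bx_i, \cdot)$, the goal will be to show that the minimizer necessarily satisfies $f_\perp = 0$.

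Next I would show that the empirical loss term depends only on $f_\parallel$. This is the step where the reproducing property is crucial: for each training point $\bx_i$,
\[
f(\bx_i) = \langle f, k(\bx_i,\cdot)\rangle_{\mathcal{H}_k} = \langle f_\parallel, k(\bx_i,\cdot)\rangle_{\mathcal{H}_k} + \langle f_\perp, k(\bx_i,\cdot)\rangle_{\mathcal{H}_k} = f_\parallel(\bx_i),
\]
because $k(\bx_i,\cdot) \in V$ and $f_\perp \perp V$. Hence $\mathcal{L}(y_i, f(\bx_i)) = \mathcal{L}(y_i, f_\parallel(\bx_i))$ for every $i$, leaving the data-fit term unaffected by $f_\perp$.

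Then I would handle the regularizer by Pythagoras in $\mathcal{H}_k$: $\|f\|_{\mathcal{H}_k}^2 = \|f_\parallel\|_{\mathcal{H}_k}^2 + \|f_\perp\|_{\mathcal{H}_k}^2 \ge \|f_\parallel\|_{\mathcal{H}_k}^2$, with equality if and only if $f_\perp = 0$. Combining the two observations, the objective evaluated at $f$ is bounded below by the objective evaluated at $f_\parallel$, and strictly so whenever $\lambda > 0$ and $f_\perp \neq 0$. Consequently any minimizer lies entirely in $V$, yielding the stated expansion $f(\bx) = \sum_{i=1}^n \alpha_i k(\bx_i,\bx)$.

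The argument is essentially a two-line computation once the decomposition is in place, so I do not expect any technical obstacle. The only subtlety worth being explicit about is that the regularizer must be a nondecreasing function of $\|f\|_{\mathcal{H}_k}$ (here it is simply the square, so monotonicity is immediate); if one generalized the loss or allowed $\lambda = 0$, the minimizer need only \emph{admit} such a representation rather than being forced to lie in $V$, and I would flag this in the write-up for completeness.
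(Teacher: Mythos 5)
Your argument is correct: it is the standard Kimeldorf--Wahba orthogonal-projection proof, and every step (closedness of the finite-dimensional span, the reproducing property killing the $f_\perp$ contribution to the data-fit term, and Pythagoras on the regularizer) is sound. Note that the paper states this theorem only as background material and gives no proof of its own, so there is nothing to compare against; your write-up, including the caveat about $\lambda=0$ and monotone regularizers, is exactly the classical treatment and would serve as a complete proof.
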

In the vector-valued setting, a similar result holds, where the coefficients become vectors and the scalar kernel is replaced by an operator-valued kernel. This generalization is critical for problems involving multi-output prediction and structured outputs.
\begin{theorem}[Representer Theorem - Vector-Valued Case]
Let $\mathcal{H}_K$ be a vector-valued RKHS with kernel $K: \mathcal{X} \times \mathcal{X} \to \mathcal{L}(\mathcal{Y})$. Given training data $(\bx_i, \by_i) \in \mathcal{X} \times \mathcal{Y}$ for $i=1, \dots, n$, the solution to the regularized problem
\begin{equation}
\min_{f \in \mathcal{H}_K} \sum_{i=1}^n \mathcal{L}(\by_i, f(\bx_i)) + \lambda \|f\|_{\mathcal{H}_K}^2
\end{equation}
admits a representation of the form
\begin{equation}
f(\bx) = \sum_{i=1}^n k(\bx_i, \bx) \boldsymbol{\alpha_i}, \quad \boldsymbol{\alpha_i} \in \mathcal{Y}.
\end{equation}
\end{theorem}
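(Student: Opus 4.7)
The plan is to adapt the classical orthogonal-decomposition proof of the scalar Representer Theorem to the operator-valued setting. First I would introduce the candidate finite-dimensional subspace $V = \overline{\mathrm{span}}\{K(\bx_i,\cdot)\by : i=1,\dots,n,\ \by \in \mathcal{Y}\} \subset \mathcal{H}_K$, which is precisely the space of functions of the form $\bx \mapsto \sum_{i=1}^n K(\bx_i,\bx)\balpha_i$ with $\balpha_i \in \mathcal{Y}$. For an arbitrary $f \in \mathcal{H}_K$ I would then invoke the closed-subspace decomposition $f = f_V + f_\perp$ with $f_V \in V$ and $f_\perp \in V^\perp$. The aim is to show that the regularized objective at $f$ dominates its value at $f_V$, so that any minimizer must lie in $V$ and hence admit the claimed representation.

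The first key step is to prove that the loss term depends only on $f_V$. I would apply the vector-valued reproducing property to the orthogonal remainder: for every training index $i$ and every probe vector $\by \in \mathcal{Y}$,
$\langle f_\perp(\bx_i), \by\rangle_{\mathcal{Y}} = \langle f_\perp, K(\bx_i,\cdot)\by\rangle_{\mathcal{H}_K} = 0$,
since $K(\bx_i,\cdot)\by \in V$. Because $\by$ ranges over all of $\mathcal{Y}$, this forces $f_\perp(\bx_i)=0$ and therefore $f(\bx_i) = f_V(\bx_i)$ for each $i$. The empirical loss $\sum_i \mathcal{L}(\by_i, f(\bx_i))$ is thus invariant under removing $f_\perp$.

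The second step is a Pythagorean argument on the RKHS norm: $\|f\|_{\mathcal{H}_K}^2 = \|f_V\|_{\mathcal{H}_K}^2 + \|f_\perp\|_{\mathcal{H}_K}^2 \geq \|f_V\|_{\mathcal{H}_K}^2$, with strict inequality whenever $f_\perp \neq 0$. Combining both steps, the regularized objective at $f$ is bounded below by that at $f_V$, and for $\lambda > 0$ the bound is strict unless $f_\perp = 0$. Hence any minimizer belongs to $V$, which by construction means it can be written as $f(\bx) = \sum_{i=1}^n K(\bx_i,\bx)\balpha_i$ for some $\balpha_i \in \mathcal{Y}$.

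The main obstacle, compared to the scalar case, is the universal-quantifier step that turns the inner-product identity $\langle f_\perp(\bx_i), \by\rangle_\mathcal{Y}=0$ into the pointwise vanishing $f_\perp(\bx_i)=0$; this is why the spanning family defining $V$ must include $K(\bx_i,\cdot)\by$ for every $\by \in \mathcal{Y}$, not merely for a single direction. A secondary technical point that must be justified is that $V$ is a closed subspace so that the orthogonal decomposition is well defined; when $\mathcal{Y}=\mathbb{R}^m$, which is the setting relevant to the rest of the paper, $V$ is finite-dimensional and closedness is automatic, otherwise one would replace $V$ by its closure, noting that evaluation and inner products pass to the limit by continuity of the reproducing kernel.
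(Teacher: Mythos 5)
Your proof is correct: it is the standard orthogonal-decomposition argument for the operator-valued Representer Theorem, and the two points that genuinely differ from the scalar case --- quantifying over all probe vectors $\by \in \mathcal{Y}$ to upgrade $\langle f_\perp(\bx_i), \by\rangle_{\mathcal{Y}}=0$ to $f_\perp(\bx_i)=0$, and the closedness of the spanning subspace when $\mathcal{Y}$ is infinite-dimensional --- are both handled. Note that the paper states this theorem only as background, citing the vector-valued RKHS literature, and gives no proof of its own, so there is no in-paper argument to compare against; your proof coincides with the classical one those references supply.
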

To conclude, we highlight two widely used kernels: the linear and Gaussian kernels. These serve as canonical examples at opposite ends of the kernel spectrum.
\begin{definition}[Linear Kernel]
The linear kernel is defined as
\begin{equation}
K(\bx, \bx') = \langle \bx, \bx' \rangle_{\mathbb{R}^D}.
\end{equation}
\end{definition}

\begin{definition}[Gaussian Kernel]
The Gaussian kernel (also known as the RBF kernel) is given by
\begin{equation}
K(\bx, \bx') = \exp\left( -\frac{\|\bx - \bx'\|^2}{2\sigma^2} \right),
\end{equation}
where $\sigma > 0$ is a bandwidth parameter. This kernel defines an infinite-dimensional RKHS and is universal on compact domains.
\end{definition}

In summary, the concepts presented in this section provide the mathematical framework necessary for the development of the proposed methodology. In the following section, these theoretical tools are applied to formulate the autoreconstructive kernel embedding algorithm.

\begin{figure*}[ht]
    \centering
    \includegraphics[width=\textwidth]{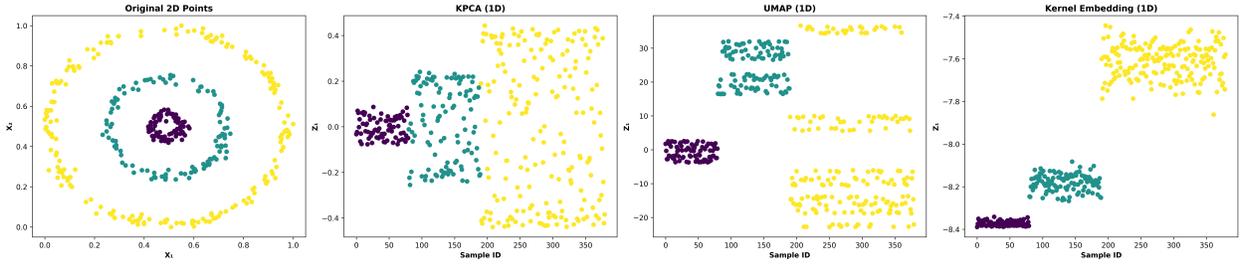}
    \caption{
    Embeddings of the concentric circles dataset. From left to right: original 2D data, Kernel PCA with 1D output, UMAP with 1D output, and the proposed \ac{KE} with 1D output. Each point is colored by its cluster label. The horizontal axis for the 1D methods corresponds to the sample index.
    }
    \label{fig:cc_embedding_comparison}
\end{figure*}

\section{Autoreconstructive Kernel Embedding}
\label{sec:methodology}

This section presents the proposed algorithmic framework for RKHS-based dimensionality reduction, referred to as Autoreconstructive Kernel Embedding (abbreviated as \ac{KE}). The method proceeds in two stages: First, reconstruction weights are optimized to encode local geometry in the RKHS (A); Second, a low-dimensional embedding is learned by aligning its kernel structure with the original reconstruction kernel (B). Each step is supported by theoretical results, including a closed-form solution for the reconstruction coefficients and an alignment-based loss for the embedding.

\subsection{Reconstruction Weight Learning}
This phase estimates a coefficient vector that reconstructs each mapped data point in the \ac{RKHS} as a weighted combination of its neighboring points. The coefficients are chosen to minimize the overall reconstruction error, thereby encoding the local geometric structure of the data in feature space.

Let $\mathcal{X} = \{ \bx_1, \bx_2, \dots, \bx_n \} \subset \mathbb{R}^D$ be a dataset consisting of $n$ samples. Each data point $\bx_i$ is mapped into a \ac{RKHS} $\mathcal{H}$ through a feature map $\phi : \mathbb{R}^d \to \mathcal{H}$, associated with a PSD kernel $k : \mathcal{X} \times \mathcal{X} \to \mathbb{R}$. The goal is to reconstruct each point $\phi(\bx_i)$ in feature space using a linear combination of other points $\phi(\bx_j)$, without explicitly accessing $\phi$ itself. The objective is to minimize the total reconstruction error in RKHS, which can be expressed as

\begin{equation}
\min_{\boldsymbol{\beta}} \sum_{i=1}^{n} \|\phi(\bx_i) - f_{\phi}(\bx_i)\|^2_\mathcal{H}
\label{eq:rkhs-reconstruction}
\end{equation}
where $f_{\phi}$ is the reconstruction function in the \ac{RKHS}.
To characterize the solution to the minimization problem in Eq.~\eqref{eq:rkhs-reconstruction}, recall that the Representer Theorem constrains the form of the optimal reconstruction function in an RKHS. The following proposition states this result explicitly.
\begin{proposition}[Kernel Expansion of the Reconstruction Function]
The reconstruction function $f_{\phi}(\bx)$ can be expressed as a kernel expansion in a vector-valued \ac{RKHS}. By the Representer Theorem, the optimal function lies in the span of $\{\phi(\bx_j)\}_{j=1}^n$ and is of the form:

\begin{equation}
f_{\phi}(\bx) = \sum_{i=1}^n \beta_i G(\bx, \bx_i) \phi(\bx_i),
\end{equation}
where $G(\bx, \bx_i)$ is a kernel function, and the coefficients $\beta_i$ are the learned reconstruction weights.
The kernel used in the reconstruction framework is assumed to be separable. Specifically, it can be written as the product of a scalar kernel function $G(\bx, \bx')$ that measures the similarity between input points and the identity operator $I_{\mathcal{H}}$ that acts on the vector-valued RKHS space. This separability allows the kernel to define pairwise similarities between input points while acting uniformly across all directions of the feature space. Formally, the kernel takes the form
\begin{equation}
K(\bx, \bx') = G(\bx, \bx') \cdot I_{\mathcal{H}},
\end{equation}
where the scalar part $G(\bx, \bx')$ captures the pairwise similarity between data points $\bx$ and $\bx'$, and this similarity is used to express relationships in the input space $\mathbb{R}^d$ and the vector part, represented by the identity operator $I_{\mathcal{H}}$, ensures that the kernel operates uniformly across all dimensions of the RKHS.

\end{proposition}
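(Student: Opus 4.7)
The plan is to recognize the reconstruction objective in Eq.~\eqref{eq:rkhs-reconstruction} as an empirical risk minimization in a vector-valued RKHS whose output space is $\mathcal{H}$ itself, with quadratic loss $\mathcal{L}(\phi(\bx_i), f(\bx_i)) = \|\phi(\bx_i) - f(\bx_i)\|_{\mathcal{H}}^2$ (optionally augmented with a regularizer $\lambda \|f\|_{\mathcal{H}_K}^2$ that does not alter the functional form of the minimizer). Once this identification is made, the Vector-Valued Representer Theorem stated earlier applies directly and yields, for any operator-valued kernel $K:\mathcal{X}\times\mathcal{X}\to\mathcal{L}(\mathcal{H})$, the finite expansion $f_{\phi}(\bx) = \sum_{i=1}^{n} K(\bx_i, \bx)\,\boldsymbol{\alpha}_i$ with $\boldsymbol{\alpha}_i \in \mathcal{H}$.

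Next, I would plug in the separable operator-valued kernel announced in the proposition, $K(\bx,\bx') = G(\bx,\bx')\cdot I_{\mathcal{H}}$, which commutes with every $\boldsymbol{\alpha}_i$ and collapses the representation to $f_{\phi}(\bx) = \sum_{i=1}^{n} G(\bx_i, \bx)\,\boldsymbol{\alpha}_i$, so that only the scalar similarity $G$ modulates the contribution of each training direction. I would then invoke the autoreconstructive nature of the problem: since every target $\phi(\bx_j)$ lies in $\mathcal{V} = \overline{\operatorname{span}}\{\phi(\bx_1),\dots,\phi(\bx_n)\}$, any component of $\boldsymbol{\alpha}_i$ orthogonal to $\mathcal{V}$ only inflates $\|f\|_{\mathcal{H}_K}$ without reducing the loss and can be projected out. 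The diagonal ansatz $\boldsymbol{\alpha}_i = \beta_i \phi(\bx_i)$ codified in the proposition then delivers exactly $f_{\phi}(\bx) = \sum_{i=1}^{n} \beta_i\, G(\bx,\bx_i)\,\phi(\bx_i)$, as claimed.

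The hard part will be cleanly justifying this last restriction: the Representer Theorem alone allows each $\boldsymbol{\alpha}_i$ to be an arbitrary element of $\mathcal{V}$, which in general expands as $\boldsymbol{\alpha}_i = \sum_{j} c_{ij}\,\phi(\bx_j)$ and produces a full matrix of coefficients rather than the diagonal one used here. Arguing that the stated form is not merely a convenient simplification but faithfully encodes the \emph{each point reconstructed from its neighbours} geometry of LLE-style methods will require either an additional structural constraint tying the coefficient at the $i$-th kernel section to the $i$-th feature vector, or an optimality argument showing that the diagonal parametrization is a stationary point of the autoreconstruction objective under natural symmetry assumptions on $G$. I expect the bulk of the technical work to concentrate there, while the steps involving the Representer Theorem and the separability identity are essentially algebraic substitutions.
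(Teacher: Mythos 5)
Your route is essentially the paper's own: identify the objective as a regularized empirical risk problem in a vector-valued RKHS with output space $\mathcal{H}$, invoke the vector-valued Representer Theorem to get $f_{\phi}(\bx)=\sum_{i=1}^{n}K(\bx_i,\bx)\,\boldsymbol{\alpha}_i$, specialize to the separable kernel $K(\bx,\bx')=G(\bx,\bx')\,I_{\mathcal{H}}$, and then substitute $\boldsymbol{\alpha}_i=\beta_i\phi(\bx_i)$. Up to the projection remark (which the paper omits but which is standard), the two arguments coincide step for step.

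The one place you should recalibrate is the step you flag as ``the hard part.'' You are right that the Representer Theorem only forces $\boldsymbol{\alpha}_i\in\mathcal{H}$ (and, after projecting onto $\operatorname{span}\{\phi(\bx_j)\}$, a full coefficient matrix $\boldsymbol{\alpha}_i=\sum_j c_{ij}\phi(\bx_j)$), so the diagonal form $\boldsymbol{\alpha}_i=\beta_i\phi(\bx_i)$ is a genuine restriction of the hypothesis class. But the paper does not close this gap either: its proof simply says ``represent the coefficients $\boldsymbol{\alpha}_i$ as $\beta_i\phi(\bx_i)$,'' i.e., it posits the diagonal parametrization as a modeling choice rather than deriving it from optimality or from a symmetry of $G$. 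So do not sink effort into proving that the diagonal ansatz is a stationary point of the unrestricted problem --- no such argument appears in the paper, and in general it is false (the unconstrained minimizer over all $c_{ij}$ will not be diagonal). The honest reading of the proposition is that it \emph{defines} the reconstruction function class (Representer-Theorem form, separable kernel, coefficients tied to the $i$-th feature vector), and the ``proof'' merely checks that this class is consistent with the Representer Theorem. If you present your version, state the diagonal restriction explicitly as an assumption rather than as a consequence; that is both faithful to the paper and more transparent than the paper's own wording.
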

\begin{proof}
By the Representer Theorem, any solution to the regularized empirical risk minimization problem
\begin{equation}
\min_{f \in \mathcal{H}_K} \sum_{i=1}^{n} \|\phi(\bx_i) - f(\bx_i)\|_{\mathcal{H}}^2 + \lambda \|f\|_{\mathcal{H}_K}^2
\end{equation}
must be of the form 
\begin{equation}
f(\bx) = \sum_{i=1}^n k(\bx_i, \bx) \boldsymbol{\balpha}_i,
\end{equation}
where $\boldsymbol{\alpha_i} \in \mathcal{H}$. In this case, we use the separable kernel $K(\bx, \bx') = G(\bx, \bx') I_{\mathcal{H}}$ and represent the coefficients $\boldsymbol{\alpha_i}$ as $\boldsymbol{\alpha_i} = \beta_i \phi(\bx_i)$, yielding the kernel expansion:
\begin{equation}
f_{\phi}(\bx) = \sum_{i=1}^n \beta_i G(\bx, \bx_i) \phi(\bx_i).
\end{equation}
Thus, the reconstruction function $f_{\phi}$ lies in the span of $\{\phi(\bx_i)\}$, as required by the Representer Theorem.
\end{proof}
Given the proposed minimization problem in the RKHS, a central insight from Kernel Learning Theory is that it is unnecessary to work directly with the mapping functions $\phi(\cdot)$. This is because such problems can be expressed entirely in terms of inner products, which are already computed via the kernel matrix. The following propositions reformulate the optimization problem to derive a loss function expressed in terms of the kernel matrix.
\begin{proposition}[Optimization of the reconstruction weights in an RKHS]
Fix a PSD kernel $G$ with Gram matrix $\bG=G(\bx_i,\bx_j)$ and separate its diagonal,
\begin{equation}
  \tilde \bG \;=\; \bG-\operatorname{diag}(\bG),
  \qquad
  (\;\tilde G_{ii}=0\text{ for every }i\;).
\end{equation}
For a coefficient vector
$\boldsymbol\beta=(\beta_1,\dots,\beta_n)^{\!\top}\in\mathbb R^{n}$ put
$\boldsymbol{D}=\operatorname{diag}(\boldsymbol\beta)$ and define the linear
reconstruction operator of $f_{\phi}(\bx)$
\begin{equation}
  \boldsymbol{F}
    \;=\;
    \tilde \bG\,\boldsymbol{D}\,\boldsymbol{\Phi}
    \;=\;
    \bigl[\,
      \sum_{j\ne1} \beta_j G_{1j}\phi(\bx_j),
      \;\dots\;,
      \sum_{j\ne n}\beta_j G_{nj}\phi(\bx_j)
    \bigr],
\end{equation}
where $\boldsymbol{\Phi}=[\phi(\bx_1) \:,\: \dots \:,\: \phi(\bx_n)]\in\mathcal H^{\,n}$. Then the empirical reconstruction error
\begin{equation}
  \mathcal L(\boldsymbol\beta)
      \;=\;\bigl\|\boldsymbol{\Phi}-\boldsymbol{F}\bigr\|_{F}^{2}
      \;=\;
      \sum_{i=1}^{n}
      \Bigl\|
         \phi(\bx_i)
         -\!\!\sum_{\substack{j=1\\j\neq i}}^{n}\!\!
              \beta_j\,G(\bx_i,\bx_j)\,\phi(\bx_j)
      \Bigr\|^{2}
\end{equation}
admits the quadratic representation
\begin{equation}\label{eq:quad-form}
\;
     \mathcal L(\boldsymbol\beta)
       \;=\;
       \boldsymbol\beta^{\!\top}\boldsymbol A\,\boldsymbol\beta
       \;-\;2\,\boldsymbol c^{\!\top}\boldsymbol\beta
       \;+\;\operatorname{tr}(\bG)
  \;
\end{equation}
with
\begin{equation}\label{eq:A-c}
  \boldsymbol A \;=\; \bG \circ \bigl(\tilde \bG^{2}\bigr),
  \qquad
  c_i \;=\; \sum_{j\neq i} G_{ij}^{2}
                 \;=\; (G\tilde G)_{ii},
\end{equation}
where “$\circ$’’ denotes the Hadamard (element-wise).
\end{proposition}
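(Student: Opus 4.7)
The goal is to show that $\mathcal{L}(\boldsymbol{\beta})=\|\boldsymbol{\Phi}-\boldsymbol{F}\|_F^{2}$ is a quadratic form in $\boldsymbol{\beta}$ with the advertised coefficients. My plan is to expand the Frobenius norm into the usual three pieces, reduce each one from an expression involving the abstract vectors $\phi(\bx_i)$ to a purely algebraic sum through the identity $\langle \phi(\bx_i),\phi(\bx_j)\rangle_{\mathcal H}=G_{ij}$, and then recognize the resulting scalar sums as, respectively, $\operatorname{tr}(\bG)$, a linear form $\boldsymbol c^{\!\top}\boldsymbol\beta$, and a Hadamard-type quadratic form in $\boldsymbol\beta$. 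The bookkeeping around the zeroed diagonal of $\tilde\bG$ is what makes the $j=k$ and $j\neq k$ contributions collapse cleanly, so the proof is essentially a matrix manipulation once that identity has been invoked.

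\textbf{Constant and linear parts.} Writing $\|\boldsymbol{\Phi}-\boldsymbol{F}\|_F^{2}=\|\boldsymbol{\Phi}\|_F^{2}-2\langle \boldsymbol{\Phi},\boldsymbol{F}\rangle_F+\|\boldsymbol{F}\|_F^{2}$, the first term is immediate: $\|\boldsymbol{\Phi}\|_F^{2}=\sum_i \langle\phi(\bx_i),\phi(\bx_i)\rangle=\sum_i G_{ii}=\operatorname{tr}(\bG)$. For the cross term I would expand
\begin{equation}
\langle \boldsymbol{\Phi},\boldsymbol{F}\rangle_F=\sum_i\Bigl\langle \phi(\bx_i),\sum_j \tilde G_{ij}\beta_j\phi(\bx_j)\Bigr\rangle=\sum_j \beta_j\sum_{i\neq j}G_{ij}^{2},
\end{equation}
where the restriction $i\neq j$ comes from $\tilde G_{ii}=0$. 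Recognizing the inner sum as $(\bG\tilde\bG)_{jj}$ delivers $\langle \boldsymbol{\Phi},\boldsymbol{F}\rangle_F=\boldsymbol c^{\!\top}\boldsymbol\beta$ with $c_j=\sum_{i\neq j}G_{ij}^{2}$, matching the claim.

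\textbf{Quadratic part.} Set $\boldsymbol M=\tilde\bG\boldsymbol D$, so that the $i$-th component of $\boldsymbol F$ equals $\sum_j M_{ij}\phi(\bx_j)$. Then
\begin{equation}
\|\boldsymbol F\|_F^{2}=\sum_i\sum_{j,k}M_{ij}M_{ik}\langle\phi(\bx_j),\phi(\bx_k)\rangle=\sum_{j,k}G_{jk}(\boldsymbol M^{\!\top}\boldsymbol M)_{jk}.
\end{equation}
A direct calculation using the symmetry of $\tilde\bG$ gives $(\boldsymbol M^{\!\top}\boldsymbol M)_{jk}=\beta_j\beta_k(\tilde\bG^{2})_{jk}$, so
\begin{equation}
\|\boldsymbol F\|_F^{2}=\sum_{j,k}\beta_j\beta_k\,G_{jk}(\tilde\bG^{2})_{jk}=\boldsymbol\beta^{\!\top}\bigl(\bG\circ\tilde\bG^{2}\bigr)\boldsymbol\beta=\boldsymbol\beta^{\!\top}\boldsymbol A\boldsymbol\beta.
\end{equation}
Assembling the three contributions yields exactly~\eqref{eq:quad-form}.

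\textbf{Expected obstacles.} The proof is essentially mechanical once the inner product collapses to $G_{ij}$, so the main thing to watch is the indexing: specifically, verifying that the $j=k$ diagonal case in $\|\boldsymbol F\|_F^{2}$ is correctly captured by $A_{jj}=G_{jj}(\tilde\bG^{2})_{jj}=G_{jj}\sum_{i\neq j}G_{ij}^{2}$, which is the point where the Hadamard product with $\bG$ (rather than with $\tilde\bG$) is essential. A secondary subtlety is the implicit assumption that $\phi$ is the feature map of the very kernel $G$ used in the reconstruction, so that $\langle \phi(\bx_i),\phi(\bx_j)\rangle_\mathcal H=G_{ij}$; I would state this identification explicitly at the start of the proof to avoid ambiguity, since the preceding proposition introduced $G$ only as the scalar part of the separable operator-valued kernel.
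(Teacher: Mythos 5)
Your proof is correct and follows essentially the same route as the paper's: both expand the squared Frobenius norm and reduce everything to the Gram matrix via $\boldsymbol\Phi^{\!\top}\boldsymbol\Phi=\bG$, the paper organizing the expansion as $\operatorname{tr}\bigl[(I-\tilde\bG\boldsymbol D)\bG(I-\boldsymbol D\tilde\bG)\bigr]$ while you carry out the same three terms with explicit index sums. Your closing remark about making explicit the identification $\langle\phi(\bx_i),\phi(\bx_j)\rangle_{\mathcal H}=G_{ij}$ is a worthwhile clarification that the paper leaves implicit.
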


\begin{proof}
Because $\boldsymbol\Phi^{\!\top}\boldsymbol \Phi=\bG$ and $\tilde \bG$ is symmetric,
\begin{equation}
  \boldsymbol \Phi-\boldsymbol F
  \;=\;\boldsymbol{\Phi-\tilde G D\Phi}
  \;=\;\boldsymbol{(I-\tilde G D)\,\Phi},
\end{equation}
and the Frobenius norm becomes
\begin{equation}
  \mathcal L(\boldsymbol\beta)
  \;=\;
  \bigl\|\boldsymbol{(I-\tilde G D)\Phi}\bigr\|_{F}^{2}
  \;=\;
  \operatorname{tr}\!\bigl[\boldsymbol{(I-\tilde G D)\,G\,(I-D\tilde G)}\bigr].
\end{equation}
Expanding the trace and using the cyclic property together with the
diagonality of~$\boldsymbol D$, one obtains
\begin{equation}
\begin{aligned}
  \mathcal L(\boldsymbol\beta)
  &=\operatorname{tr}(\bG)
     -2\,\operatorname{diag}(\bG \tilde \bG)^{\!\top}\boldsymbol\beta
     +\sum_{j,\ell}\beta_j\beta_\ell
        G_{j\ell}\!(\tilde \bG^{2})_{j\ell} \\
  &=\operatorname{tr}(\bG)
     -2\,\boldsymbol c^{\!\top}\boldsymbol\beta
     +\boldsymbol\beta^{\!\top}
        \!\bigl(\bG\circ\tilde \bG^{2}\bigr)\boldsymbol\beta,
\end{aligned}
\end{equation}
which is Eq.\eqref{eq:quad-form} with the definitions in Eq.\eqref{eq:A-c}.
\end{proof}

Having expressed the \ac{RKHS} reconstruction loss in quadratic form, the next step is to determine the coefficient vector $\boldsymbol{\beta}$ that minimizes this loss. The following proposition establishes the conditions under which a closed-form solution exists and provides the explicit expression for the minimizer.

\begin{proposition}[Closed-Form Minimizer of the Reconstruction Loss]
If the matrix $\boldsymbol{A} = \boldsymbol{G} \circ (\tilde{\boldsymbol{G}}^2)$ is nonsingular, the minimizer of $\mathcal{L}(\boldsymbol{\beta})$ is unique and given by
\begin{equation}
\boldsymbol{\beta}^\star = \boldsymbol{A}^{-1} \boldsymbol{c}.
\end{equation}
If $\boldsymbol{A}$ is PSD but singular, the minimum-norm solution is
\begin{equation}
\boldsymbol{\beta}^\star = \boldsymbol{A}^{\dagger} \boldsymbol{c},
\label{eq:beta-star}
\end{equation}
where ${}^\dagger$ denotes the Moore–Penrose pseudoinverse.
\end{proposition}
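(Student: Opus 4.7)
The plan is to treat the loss $\mathcal{L}(\boldsymbol{\beta}) = \boldsymbol{\beta}^\top \boldsymbol{A}\boldsymbol{\beta} - 2\boldsymbol{c}^\top \boldsymbol{\beta} + \operatorname{tr}(\boldsymbol{G})$, already established by the previous proposition, as a standard convex quadratic optimization in $\mathbb{R}^n$ and solve it by first-order optimality. First I would compute the gradient $\nabla \mathcal{L}(\boldsymbol{\beta}) = 2\boldsymbol{A}\boldsymbol{\beta} - 2\boldsymbol{c}$ and obtain the normal equation $\boldsymbol{A}\boldsymbol{\beta} = \boldsymbol{c}$. The nonsingular case then follows immediately: $\boldsymbol{A}^{-1}$ exists, the solution $\boldsymbol{\beta}^\star = \boldsymbol{A}^{-1}\boldsymbol{c}$ is unique, and uniqueness of the global minimum is guaranteed as soon as the Hessian $2\boldsymbol{A}$ is positive definite.

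The key structural step, which I expect to be the main obstacle, is justifying that $\boldsymbol{A}=\boldsymbol{G}\circ(\tilde{\boldsymbol{G}}^{2})$ is always PSD so that the optimization is convex and the pseudoinverse expression is meaningful. I would argue this in two steps: since $\tilde{\boldsymbol{G}}$ is symmetric, $\tilde{\boldsymbol{G}}^{2}=\tilde{\boldsymbol{G}}^{\top}\tilde{\boldsymbol{G}}$ is PSD; and since $\boldsymbol{G}$ is PSD by hypothesis (it is a Gram matrix of a PSD kernel), the Schur product theorem yields that the Hadamard product $\boldsymbol{G}\circ \tilde{\boldsymbol{G}}^{2}$ is also PSD. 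Alternatively, one can observe that $\mathcal{L}(\boldsymbol{\beta})=\|(\boldsymbol{I}-\tilde{\boldsymbol{G}}\boldsymbol{D})\boldsymbol{\Phi}\|_{F}^{2}\geq 0$ for every $\boldsymbol{\beta}$, which forces the associated quadratic form to be PSD; this alternative route has the advantage of not requiring an external theorem.

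Having established convexity, I would next handle the singular PSD case. Because $\mathcal{L}$ is a non-negative quadratic it is bounded below and therefore attains its infimum, which by convexity happens precisely at points where the gradient vanishes; hence the linear system $\boldsymbol{A}\boldsymbol{\beta}=\boldsymbol{c}$ is consistent, or equivalently $\boldsymbol{c}\in\operatorname{range}(\boldsymbol{A})$. The solution set is then the affine subspace $\boldsymbol{\beta}_{0}+\ker(\boldsymbol{A})$ for any particular solution $\boldsymbol{\beta}_{0}$. To select the minimum-norm representative, I would invoke the standard characterization of the Moore–Penrose pseudoinverse: for symmetric PSD $\boldsymbol{A}$ one has $\operatorname{range}(\boldsymbol{A}^{\dagger})=\operatorname{range}(\boldsymbol{A})=\ker(\boldsymbol{A})^{\perp}$, and $\boldsymbol{A}\boldsymbol{A}^{\dagger}\boldsymbol{c}=\boldsymbol{c}$ whenever $\boldsymbol{c}\in\operatorname{range}(\boldsymbol{A})$. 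Consequently $\boldsymbol{\beta}^{\star}=\boldsymbol{A}^{\dagger}\boldsymbol{c}$ solves the normal equation and lies orthogonal to the null space, so it is the unique element of smallest Euclidean norm in the solution set.

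Overall, the proof is short once convexity and consistency are in place: the first part is a textbook unconstrained quadratic minimization, and the second part is a textbook pseudoinverse argument. The real content is the PSD verification of $\boldsymbol{A}$ and the observation that the nonnegativity of the Frobenius-norm loss forces $\boldsymbol{c}\in\operatorname{range}(\boldsymbol{A})$; these are the two places where I would be most careful, since they are exactly what makes the pseudoinverse formula well-defined and well-motivated.
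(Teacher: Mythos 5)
Your proof is correct and follows essentially the same route as the paper's: compute the gradient of the quadratic form, obtain the normal equations $\boldsymbol{A}\boldsymbol{\beta}=\boldsymbol{c}$, and conclude via invertibility or the Moore--Penrose pseudoinverse. You do go beyond the paper's proof in two worthwhile ways: you verify that $\boldsymbol{A}=\boldsymbol{G}\circ\tilde{\boldsymbol{G}}^{2}$ is in fact always PSD (by the Schur product theorem, or from nonnegativity of the Frobenius-norm loss), and you check that $\boldsymbol{c}\in\operatorname{range}(\boldsymbol{A})$ so that the normal equations are consistent in the singular case --- a point the paper's proof silently assumes when it identifies the set of minimizers with the solution set of $\boldsymbol{A}\boldsymbol{\beta}=\boldsymbol{c}$, and without which $\boldsymbol{A}^{\dagger}\boldsymbol{c}$ would only be a least-squares solution rather than an exact minimizer.
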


\begin{proof}
From the quadratic form in Proposition~\ref{eq:quad-form}, the gradient with respect to $\boldsymbol{\beta}$ is
\begin{equation}
\nabla_{\boldsymbol{\beta}} \mathcal{L} = 2 \boldsymbol{A} \, \boldsymbol{\beta} - 2 \boldsymbol{c}.
\end{equation}
Setting this to zero yields the normal equations $\boldsymbol{A} \boldsymbol{\beta} = \boldsymbol{c}$.  
If $\boldsymbol{A}$ is invertible, the solution is $\boldsymbol{A}^{-1} \boldsymbol{c}$ and strict convexity ensures uniqueness.  
If $\boldsymbol{A}$ is singular but PSD, the set of minimizers is $\{ \boldsymbol{\beta} \,|\, \boldsymbol{A} \boldsymbol{\beta} = \boldsymbol{c} \}$; the pseudoinverse yields the minimum-norm element in this set.
\end{proof}

The above propositions complete the first stage of the method by providing an explicit kernel-based formulation for computing the optimal reconstruction weights $\boldsymbol{\beta}$. These weights encode the local geometric relationships between points in the \ac{RKHS} and serve as the foundation for the second stage, where is learned a low-dimensional embedding whose kernel structure aligns with the reconstruction geometry defined in this step.

\begin{figure*}[ht]
    \centering
    \includegraphics[width=\textwidth]{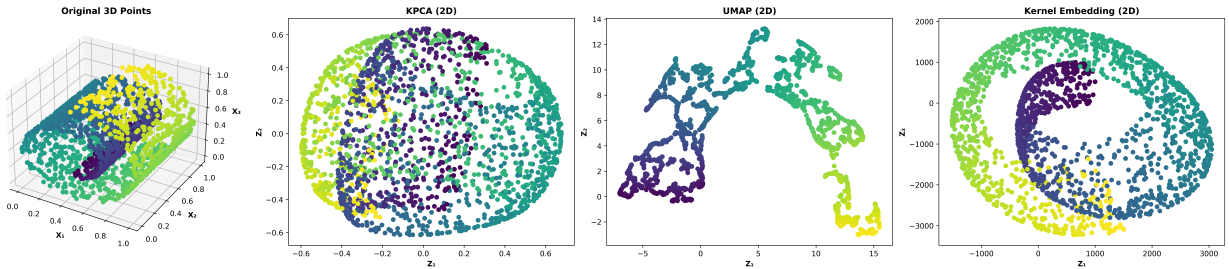}
    \caption{
    Embeddings of the Swiss roll dataset. From left to right: original 3D data, Kernel PCA with 2D output, UMAP with 2D output, and the \ac{KE} with 2D output. Points are colored based on their position along the manifold. The 3D plot shows the original data geometry; the remaining plots show the corresponding 2D projections.
    }
    \label{fig:swiss_roll_embedding_comparison}
\end{figure*}
\subsection{Embedding via Kernel Alignment}

With the reconstruction weights $\boldsymbol{\beta}$ computed in the first step, this second step aims to learn a low-dimensional embedding that preserves the autoreconstructive geometry of the \ac{RKHS} learned previously. The central idea is to construct an embedding whose induced kernel matrix matches, as closely as possible, the reconstruction kernel $\bGR$ obtained from the high-dimensional feature space. The following proposition formalizes this alignment objective. 

\begin{proposition}[Kernelized Reconstruction Objective for the Embedding Step]
Let $\bG_R\in\mathbb{R}^{n\times n}$ be the reconstruction kernel from the RKHS stage, and let $\boldsymbol{\beta}\in\mathbb{R}^n$ be the fixed reconstruction weights with $D=\mathrm{diag}(\boldsymbol{\beta})$. For a target dimension $d$, parameterize the embedding as
\begin{equation}
\boldsymbol{Z} = \bG_R\,\balpha \in \mathbb{R}^{n\times d},
\label{eq: z-embedding}
\end{equation}
with learnable $\balpha\in\mathbb{R}^{n\times d}$. 
Let $\bG_H\in\mathbb{R}^{n\times n}$ be the kernel matrix computed from the embedded points $\boldsymbol{Z}$, and define
\begin{equation}
\tilde{\bG}_H \;=\; \bG_H - \mathrm{diag}(\bG_H),
\end{equation}
\begin{equation}
A(\bG_H)\;=\;\bG_H \circ (\tilde{\bG}_H^2),
\end{equation}
\begin{equation}
c(\bG_H)\;=\;\operatorname{diag}(\bG_H \tilde{\bG}_H),
\end{equation}
where $\circ$ denotes the Hadamard product.  
Then the latent RKHS reconstruction error induced by the operator $\tilde{\bG}_H D$ is
\begin{equation}
\mathcal{L}_\balpha(\balpha,\boldsymbol{\beta})\;=\;\boldsymbol{\beta}^\top A(\bG_H)\,\boldsymbol{\beta} \;-\; 2\,c(\bG_H)^\top \boldsymbol{\beta} \;+\; \operatorname{tr}(\bG_H).
\end{equation}
Minimizing $\mathcal{L}_\balpha$ over $\balpha$ aligns the embedding-space kernel $\bG_H$ with the original reconstruction kernel $\bG_R$.
\end{proposition}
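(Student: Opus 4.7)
The plan is to exploit the fact that the latent-space loss $\mathcal L_{\balpha}$ has exactly the same algebraic shape as the \ac{RKHS} reconstruction loss treated in the earlier proposition, with the embedding Gram matrix $\bGH$ playing the role previously played by $\bG$. I would first view the rows of $\boldsymbol{Z}=\bGR\,\balpha$ as latent feature vectors whose Gram matrix is $\bGH$, and define the reconstruction operator $\tilde{\bGH} D$ acting on them exactly as in the first stage, so that
\begin{equation*}
\mathcal L_{\balpha}(\balpha,\boldsymbol{\beta}) \;=\; \bigl\|(I-\tilde{\bGH} D)\,\boldsymbol{Z}\bigr\|_{F}^{2}.
\end{equation*}
From here the derivation is a verbatim transcription of the earlier trace-and-Frobenius argument: using $\boldsymbol{Z}^{\!\top}\boldsymbol{Z}=\bGH$, the cyclic property of the trace, and the diagonality of $D$, the expansion yields the stated quadratic form with $A(\bGH)$ and $c(\bGH)$ arising from the same substitutions as in the proof of the first-stage proposition.

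For the alignment interpretation, I would argue variationally. By the closed-form minimizer proposition the fixed first-stage weights satisfy $\boldsymbol{\beta}=A(\bGR)^{\dagger}c(\bGR)$, so $\boldsymbol{\beta}$ is a stationary point of the quadratic functional $\boldsymbol{\beta}^{\!\top}A(\bG)\boldsymbol{\beta}-2c(\bG)^{\!\top}\boldsymbol{\beta}$ at $\bG=\bGR$. Because $\mathcal L_{\balpha}$ is exactly this functional evaluated at $\bG=\bGH$ plus the offset $\operatorname{tr}(\bGH)$, the choice $\bGH=\bGR$ attains the first-stage minimum loss value. Minimizing $\mathcal L_{\balpha}$ over $\balpha$ therefore drives the triple $\bigl(A(\bGH)\boldsymbol{\beta},\,c(\bGH),\,\operatorname{tr}(\bGH)\bigr)$ toward $\bigl(A(\bGR)\boldsymbol{\beta},\,c(\bGR),\,\operatorname{tr}(\bGR)\bigr)$, which is the precise sense in which $\bGH$ becomes aligned with $\bGR$ through the lens of $\boldsymbol{\beta}$.

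The main obstacle I anticipate is making the alignment claim fully rigorous: $\mathcal L_{\balpha}$ depends on $\bGH$ only through three scalar summaries, so its minimization over $\balpha$ does not uniquely determine $\bGH$ and cannot enforce $\bGH=\bGR$ pointwise. The cleanest statement is therefore the variational one above, namely that $\bGH=\bGR$ is always a minimizer, so the proof should conclude with that observation rather than attempt a stronger pointwise matching claim. Everything else reduces to a mechanical re-application of the quadratic-form identity already established in the first stage.
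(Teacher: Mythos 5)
Your first part — transplanting the trace-and-Frobenius expansion from the first-stage proposition with $\bGH$ in place of $\bG$ — is exactly what the paper intends: the paper gives no standalone proof of this proposition and simply remarks that "both phases share the same kernelized quadratic template," so your mechanical re-derivation is the right reading. One slip, though: you write the loss as $\|(I-\tilde{\bGH}D)\bZ\|_F^2$ and invoke $\bZ^\top\bZ=\bGH$. That identity holds only if the latent kernel is linear. The paper explicitly says $\bGH$ is computed from $\bZ$ by a \emph{typically nonlinear} kernel, so the reconstruction must act on the latent feature images $\psi(\boldsymbol z_1),\dots,\psi(\boldsymbol z_n)$ whose Gram matrix is $\bGH$, i.e.\ $\mathcal L_\balpha=\|(I-\tilde{\bGH}D)\boldsymbol\Psi\|_F^2$ with $\boldsymbol\Psi^\top\boldsymbol\Psi=\bGH$. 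The algebra is then literally the first-stage computation and the stated quadratic form follows; the fix costs nothing but is needed for the proposition as stated.

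The genuine gap is in your variational justification of the alignment claim. You argue that since $\boldsymbol\beta$ is a stationary point of $\boldsymbol\beta\mapsto\boldsymbol\beta^\top A(\bGR)\boldsymbol\beta-2c(\bGR)^\top\boldsymbol\beta$, the choice $\bGH=\bGR$ "attains the first-stage minimum loss value" and hence "is always a minimizer" of $\mathcal L_\balpha$. The two optimizations run over different variables: stage one minimizes over $\boldsymbol\beta$ at fixed $\bG=\bGR$, whereas stage two minimizes over $\bGH$ (through $\balpha$) at fixed $\boldsymbol\beta$, and stationarity in one variable says nothing about optimality in the other. Indeed $\mathcal L_\balpha\ge 0$ as a sum of squared RKHS norms, and degenerate configurations (e.g.\ a collapsed embedding with $\bGH\to 0$, for which $A$, $c$, and $\operatorname{tr}(\bGH)$ all vanish) drive the loss to $0$, which is generally strictly below $\mathcal L(\boldsymbol\beta^\star)>0$ attained at $\bGH=\bGR$. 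So $\bGH=\bGR$ need not be a minimizer at all. Your own caveat — that $\mathcal L_\balpha$ sees $\bGH$ only through three scalar summaries and cannot force pointwise matching — is correct and is actually the honest version of the situation; the paper sidesteps this by stating the alignment claim only as an informal interpretation, not as something proved. Drop the "always a minimizer" assertion and present the alignment sentence as motivation, and your proof matches the paper's.
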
 
Although $\mathcal{L}_\balpha(\balpha;\boldsymbol{\beta})$ has closed quadratic form it is not quadratic in $\balpha$, because $\bG_H$ depends on $\balpha$ through $\bZ=\bG_R\balpha$ and the (typically nonlinear) kernel on the embedded points. Consequently, there is no closed‑form minimizer for $\balpha$; one must optimize $\mathcal{L}_\balpha(\balpha;\boldsymbol\beta)$ by gradient‑based iterative methods , backpropagating through $\bZ=\bG_R\balpha$ and $\bG_H$.
Both phases share the same kernelized quadratic template in $\boldsymbol{\beta}$ and this reuse of $\mathcal{L}(\cdot,\boldsymbol{\beta})$ ensures that the latent kernel $\bG_H(\balpha)$ is trained to exhibit the same autoreconstructive geometry as enforced in step (A).

\subsection{Learning Algorithm}

With the previous propositions, the proposed algorithm can be summarized as a two–phase procedure that progressively transfers autoreconstructive geometry from the high–dimensional \ac{RKHS} to a low-dimensional embedding.
\begin{enumerate}
    \item  Reconstruction Weight Learning. In the first phase, we determine the reconstruction weights $\boldsymbol{\beta}$ that best express each feature–space point $\phi(\bx_i)$ as a linear combination of its neighbors.  This is formulated as
    \begin{equation}
    \min_{\boldsymbol{\beta}}
    \sum_{i=1}^n
    \left\|
      \phi(\bx_i) -
      \sum_{\substack{j=1\\j\neq i}}^{n} \beta_j\, G_R(\bx_i, \bx_j)\, \phi(\bx_j)
    \right\|_{\mathcal{H}}^2,
    \end{equation}
where $G_R(\bx_i,\bx_j)$ is the reconstruction kernel, i.e., the Gram matrix of the original data in the RKHS with its diagonal removed. This objective reduces to a convex quadratic form in $\beta$ with closed–form solution.
\item Learning Low–Dimensional Embeddings.
With $\boldsymbol{\beta}$ fixed, we seek a latent projection $\balpha \in \mathbb{R}^{n \times d}$ such that the embedding
\begin{equation}
\bZ \;=\; \bG_R\,\balpha
\end{equation}
induces a latent kernel $\bG_H$ that replicates the same reconstruction pattern as in previous step. Since $\bG_H$ depends nonlinearly on $\balpha$ through both $\bZ$ and the kernel map, no closed–form solution exists, and gradient–based optimization is used to minimized iteratively.
\end{enumerate}
Together, these two phases constitute the Autoreconstructive Kernel Embedding procedure: local reconstruction relations are first encoded in the RKHS and then consistently transferred to the latent space through a kernel alignment objective

\subsection{Out–of–Sample Embedding}
Once the low-dimensional mapping $ \bZ = \bG_R \balpha $ is learned on the training data, we extend the embedding to unseen test points using Nystrom approximation. Let $\mathcal{X}^{\star} = \{ \bx'_1, \bx'_2, \dots, \bx'_m \} \subset \mathbb{R}^D$ be a test dataset consisting of $m$ samples and $ \bG_R^{\star} \in \mathbb{R}^{m \times n}$ denote the reconstruction kernel between test and training points, computed as
\begin{equation}
\bG_R^{\star} = 
\begin{bmatrix}
k(\bx'_1, \bx_1) & \cdots & k(\bx'_1, \bx_n) \\
\vdots & \ddots & \vdots \\
k(\bx'_m, \bx_1) & \cdots & k(\bx'_m, \bx_n)
\end{bmatrix}
\end{equation}
 Given the learned projection matrix $\balpha \in \mathbb{R}^{n \times d}$, a batch of new points is embedded through the kernel similarity matrix
\begin{equation}
\bZ' = \bG_R^{\star} \balpha.
\end{equation}
This formulation respects the same RKHS-induced geometry learned during training and provides a principled way to embed new samples without retraining, maintaining consistency with the autoreconstructive structure.

\subsection{Computational Complexity Analysis}
The following discussion details the time complexity of the proposed method. The Memory requirements of the algorithm, notably the $O(n^2)$ storage cost of the kernel matrix, are not analyzed in depth. Conserving the previous notation, $n$ denotes the number of data points, $d$ represents the latent dimensionality used in the computation of $\balpha$, and $T$ indicates the number of gradient descent iterations in the learning low-dimensional embedding step. Each operation of the algorithm is described below:

\begin{enumerate}
    \item Kernel matrix computation \\
    Construction of the Gram matrix $\bK$: $\mathcal{O}(n^2)$.
    \item Computation of $\tilde{\bG}$ (zero-diagonal form) \\
    Removal and reinsertion of $n$ diagonal entries: $\mathcal{O}(n^2)$.
    \item Computation of $\boldsymbol{A}$ matrix (Eq.\ref{eq:A-c}) \\
    Matrix-matrix product ${\bG}\tilde{\bG}$: $\mathcal{O}(n^3)$ \\
    Hadamard product $\boldsymbol{A} = \bG \circ (\tilde{\bG}^2)$: $\mathcal{O}(n^2)$ \\
    Asymptotic cost: $\mathcal{O}(n^3)$.

    \item Computation of vector $\boldsymbol{c}$ (Eq.\ref{eq:A-c}) \\
    Matrix-matrix product $\bG \tilde{\bG}$: $\mathcal{O}(n^3)$ \\
    Diagonal extraction: $\mathcal{O}(n)$ \\
    Asymptotic cost: $\mathcal{O}(n^3)$.

    \item Quadratic loss evaluation (Eq. \ref{eq:quad-form}) \\
    Evaluation of $\boldsymbol{\beta}^{\top} \boldsymbol{A} \boldsymbol{\beta} - 2 \boldsymbol{c}^{\top} \boldsymbol{\beta} + \operatorname{tr}(\bK)$: $\mathcal{O}(n^2)$.

    \item Closed-form solver (Eq. \ref{eq:beta-star}) \\
    Dense linear solve $\boldsymbol{A} \boldsymbol{\beta} = \boldsymbol{c}$: $\mathcal{O}(n^3)$.

    \item Computation of $\boldsymbol{Z}$ matrix (Eq. \ref{eq: z-embedding}) \\
    Product of $d \times n$ matrix and $n \times n$ kernel: $\mathcal{O}(d \, n^2)$.

    \item Gradient descent solver ($\balpha$-mode) \\
    Per iteration:
    \begin{itemize}
        \item Backward pass through the loss: $\mathcal{O}(n^2)$
        \item Loss computation (as in Step 5): $\mathcal{O}(n^2)$
    \end{itemize}
\end{enumerate}
The computational complexity can be divided into two phases. In the reconstruction weights learning stage ($\beta$-mode), the dominant cost arises from Steps 1-6, leading to a total complexity of $\mathcal{O}(n^3)$. In the embedding kernel alignment stage optimization requires $T$ gradient descent iterations, each involving Steps 1-6 for kernel recomputation, which results in a total complexity of $\mathcal{O}(T \, n^3)$. These estimates are derived under the standard dense-matrix model and assume naïve cubic costs for general matrix-matrix products and dense linear solves. In practice, the use of optimized routines and PyTorch kernels significantly reduces wall-clock time for many operations.

The proposed methodology offers a straightforward approach to produce embeddings that preserve autoreconstructive properties in the \ac{RKHS}. The following section delves into the numerical experiments conducted with this algorithm.

\section{Numerical Experiments}
\label{sec:numerical_experiments}
This section presents a series of numerical experiments conducted to evaluate the proposed \ac{RKHS} dimensionality reduction framework. The method is assessed on both synthetic and real-world datasets, with particular attention to embedding fidelity and classification performance.

For unsupervised tasks, we evaluated the quality of the low-dimensional embeddings using two standard clustering validity indices: the Davies-Bouldin Index \cite{Davies1979} and the Calinski-Harabasz Index \cite{Calinski1974}, and two embedding fidelity measurements: trustworthiness and continuity \cite{vanderMaaten2008,Venna2001} . The Davies-Bouldin Index quantifies the average similarity between each cluster and its most similar neighbor; lower values indicate more compact and well-separated clusters.

All experiments were conducted on a high-performance computing server equipped with an NVIDIA A40 GPU (46,068 MiB memory) and dual AMD EPYC 7763 64-core processors, totaling 128 logical CPUs. Training and evaluation scripts were implemented in Python 3.11.11, and released in the accompanying repository \cite{feito_casares_2025_16812806}.

Two baseline methods were used for comparison: KPCA and UMAP. These methods collectively represent classical, non-linear kernel methods and newer probabilistic MnL techniques.

\begin{figure*}[ht]
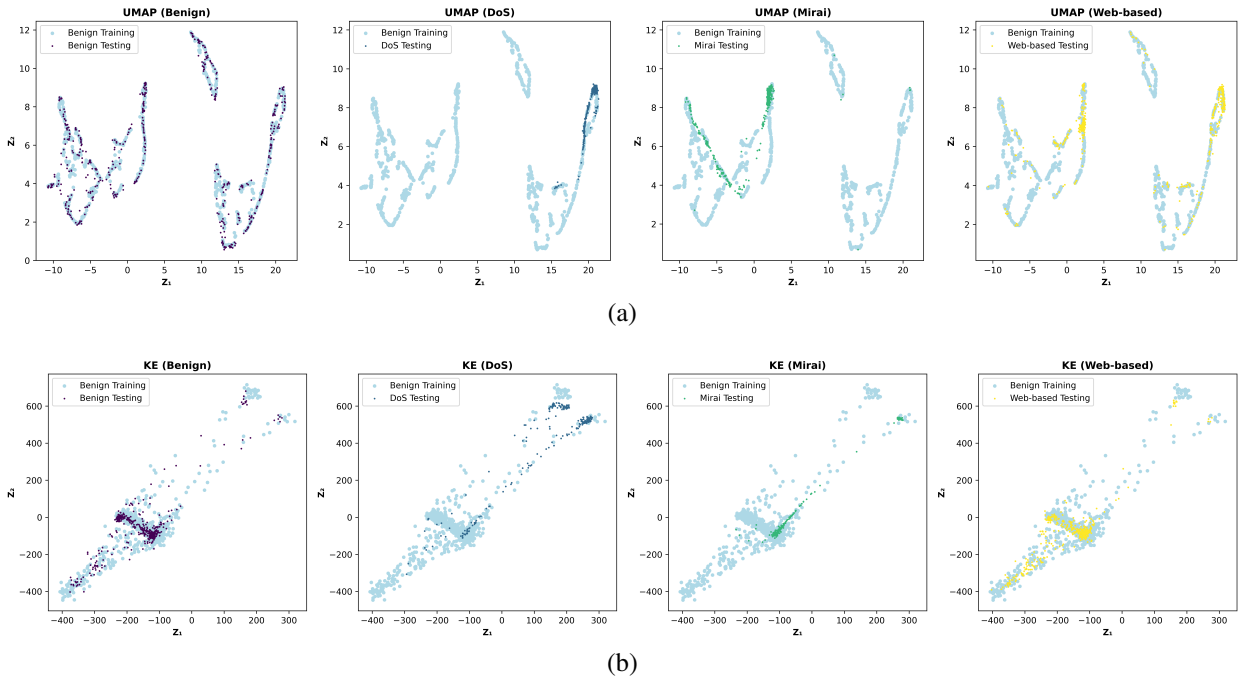

    \centering
    \begin{minipage}{\textwidth}
        \centering
        \includegraphics[width=\textwidth]{figures/CICCIOT_2023_UMAP.png}
        (a)
    \end{minipage}
    
    \vspace{1em}

    \begin{minipage}{\textwidth}
        \centering
        \includegraphics[width=\textwidth]{figures/CICCIOT_2023_KE.png}
        (b)
    \end{minipage}

    \caption{
    Embeddings on the CIC-CIoT2023 dataset. Both visualizations display benign training samples (light blue) and test samples from four traffic categories: Benign, DoS, Mirai, and Web-based. Panel (a) embeddings produced by the UMAP algorithm, while panel (b) embeddings obtained using the proposed \ac{KE} method.}
    \label{fig:comparison_umap_ke}
\end{figure*}
\subsection{Concentric Circles}

The concentric circles dataset consists of three interleaved circular clusters in $\mathbb{R}^2$. This synthetic dataset is designed to evaluate the ability of a dimensionality reduction method to preserve nonlinear separability in a compact latent representation.

Figure~\ref{fig:cc_embedding_comparison} shows the embeddings obtained using three baseline methods and the proposed \ac{KE}. In this experiment, each method compresses the original 2D input to a one-dimensional latent space. Both Kernel PCA and UMAP fail to produce a meaningful linear ordering that separates all three classes, resulting in partially or completely overlapping clusters. In contrast, the proposed method clearly separates the three groups along the 1D axis, demonstrating its ability to preserve a nonlinearly separable structure under strong dimensionality reduction. Additionally, as shown in Table~\ref{tab:clustering_indices}, the \ac{KE} significantly outperforms the other methods on both metrics. Its Davies-Bouldin score is nearly two orders of magnitude lower than the next best method (UMAP), and its Calinski-Harabasz score is substantially higher, indicating that the resulting embedding yields compact, well-separated clusters. These results confirm the capacity of the algorithms to preserve nonlinear structures in low-dimensional spaces and validate the qualitative observations made from the visualizations.

\begin{table}[ht]
\centering
\caption{Clustering validity indices for different representations. Lower Davies-Bouldin and higher Calinski-Harabasz values indicate better-defined clusters.}
\label{tab:clustering_indices}
\begin{tabular}{ccc}
\hline
\textbf{Representation} & \textbf{Davies-Bouldin} ↓ & \textbf{Calinski-Harabasz} ↑ \\
\hline
Original Space          & 23.30                     & 0.395                         \\
\hline
KPCA (2D)                & 45.29                     & 0.410                         \\
\hline
UMAP (2D)               & 9.37                      & 142.21                        \\
\hline
\ac{KE} (2D)   & \textbf{0.21}             & \textbf{5887.58}              \\
\hline
\end{tabular}
\end{table}

\subsection{Swiss Roll}
The Swiss roll dataset is a canonical three-dimensional manifold with intrinsic curvature, commonly used to evaluate the ability of dimensionality reduction algorithms to preserve global geometry and nonlinear structure. The usual MnL goal is to project the curled 3D surface into a two-dimensional space while maintaining the relative ordering of points along the manifold.

Figure~\ref{fig:swiss_roll_embedding_comparison} illustrates different embeddings generated from the Swiss roll data with the different methods. The first panel shows the original 3D manifold, where the smooth spiral structure of the roll is evident. Kernel PCA, shown in the second panel, projects the data into a 2D space but fails to fully unroll the manifold, resulting in a projection where neighboring points may become disordered. UMAP, shown in the third panel, succeeds in unfolding the roll but introduces noticeable local distortions and compressions in some regions, particularly toward the boundaries. The last panel displays the embedding produced by the proposed method. Visually, it reveals a smooth and continuous unfolding of the manifold with a clear preservation of the point order along the intrinsic dimension. The color gradient-applied based on the geodesic position of each point-remains consistent and continuous throughout the embedding, suggesting strong fidelity to the original manifold structure.

To complement the visual analysis, we also measured the ability of each method to preserve the local neighborhood structure using two standard metrics, trustworthiness and continuity. Trustworthiness assesses the degree to which spurious neighbors are introduced in the embedding, while continuity evaluates how many true neighbors from the original space are retained. The results, summarized in Table~\ref{tab:trust_cont}, show that while UMAP achieves nearly perfect trustworthiness, it suffers from low continuity. In contrast, the \ac{KE} balances both metrics more effectively, suggesting better consistency in neighborhood preservation across scales.

\begin{table}[ht]
\centering
\caption{Trustworthiness and continuity scores for each dimensionality reduction method ($k=15$). Higher values indicate better local structure preservation.}
\label{tab:trust_cont}
\begin{tabular}{lcc}
\hline
\textbf{Method} & \textbf{Trustworthiness ↑ } & \textbf{Continuity ↑ } \\
\hline
KPCA              & 0.8617 & 0.7496 \\
UMAP              & \textbf{0.9990} & 0.1769 \\
\ac{KE}  & 0.9061 & \textbf{0.6118} \\
\hline
\end{tabular}
\end{table}

\subsection{Cancer Biomolecules}

\begin{table*}[t]
\centering
\caption{Classification performance of the \ac{KE}  method across different latent dimensions. Tanimoto kernel baselines using SVM and max-margin conditional random
field (MMCRF) are included for comparison. Arrows indicate that higher values are better. Best results per column are highlighted in bold.}
\label{tab:ke_classification_comparison}
\begin{tabular}{cccccc}
\hline
\textbf{Latent Dim / Method} & \textbf{Sensitivity ↑} & \textbf{Specificity ↑} & \textbf{Accuracy ↑} & \textbf{ROC AUC ↑} & \textbf{F1 Score ↑} \\
\hline
KE-2     & 0.330 ± 0.125 & 0.842 ± 0.142 & 0.671 ± 0.046 & 0.620 ± 0.020 & 0.408 ± 0.084 \\
KE-3     & 0.190 ± 0.211 & 0.918 ± 0.236 & 0.678 ± 0.057 & 0.649 ± 0.021 & 0.262 ± 0.127 \\
KE-5     & 0.283 ± 0.164 & 0.910 ± 0.195 & 0.702 ± 0.049 & 0.697 ± 0.023 & 0.387 ± 0.092 \\
KE-10    & 0.354 ± 0.146 & 0.903 ± 0.180 & 0.719 ± 0.052 & 0.710 ± 0.024 & 0.463 ± 0.088 \\
KE-20    & \textbf{0.441 ± 0.133} & 0.901 ± 0.178 & \textbf{0.749 ± 0.054} & \textbf{0.758 ± 0.024} & 0.550 ± 0.077 \\
KE-50    & 0.231 ± 0.204 & 0.918 ± 0.237 & 0.692 ± 0.057 & 0.699 ± 0.023 & 0.320 ± 0.112 \\
KE-100   & 0.120 ± 0.237 & \textbf{0.927 ± 0.249} & 0.660 ± 0.060 & 0.529 ± 0.006 & 0.147 ± 0.158 \\
\hline
SVM \cite{su2010multilabel}   & -- & -- & 0.641 & -- & 0.527 \\
MMCRF \cite{su2010multilabel} & -- & -- & 0.676 & -- & \textbf{0.562} \\
\hline
\end{tabular}
\end{table*}

This set of experiments was conducted on a real-world dataset of molecular fingerprints from the U.S. National Cancer Institute (NCI) \cite{su2010multilabel}. Each molecule is represented as a high-dimensional binary vector, indicating the presence or absence of specific chemical substructures. These types of molecular descriptors are widely used in drug discovery and cancer compound screening tasks. Given the sparsity and binary nature of the data, molecular similarities were computed using the Tanimoto kernel, an established choice in cheminformatics for comparing binary vectors.

To assess the ability of the proposed method to extract meaningful structure, we first applied it to generate low-dimensional embeddings of molecular fingerprint data. To evaluate how performance varies with the degree of compression, we conducted experiments across a range of latent dimensions, from 2 to 100. For each embedding, we trained a linear Support Vector Machine (SVM) using five-fold stratified cross-validation to predict cancer-type activity, and evaluated performance using sensitivity, specificity, accuracy, ROC (Receiver Operating Characteristic), AUC (Area Under the Curve), and F1 score. 

As shown in Table \ref{tab:ke_classification_comparison}, the \ac{KE} method achieves its best overall performance at 20 dimensions (KE-20), reaching a sensitivity of 0.441, specificity of 0.901, accuracy of 0.749, and a peak ROC AUC of 0.758. These values represent the optimal trade-off among the measured criteria. Even under extreme compression (e.g., KE-2 and KE-3), the method performs competitively, achieving accuracies of 0.671 and 0.678, respectively higher than the classical SVM baseline (0.641) and comparable to MMCRF (0.676). The F1 score also peaks at KE-20 (0.550), only slightly below MMCRF 0.562. However, performance begins to degrade at higher dimensions, particularly in sensitivity and F1 score, suggesting potential overfitting or diminished embedding efficiency. Although MMCRF achieves the best F1 score, the proposed method surpasses it in accuracy and ROC AUC, indicating a stronger overall discriminative capability. This highlights the distinctiveness of our method, which achieves competitive results while simultaneously learning compact, informative representations. Nonetheless, a full comparison remains limited, as prior studies did not report all the evaluation metrics.

\subsection{IoT Network Intrusions}

This last set of experiments was conducted on the CICIoT2023 dataset, a comprehensive real-world collection of IoT network traffic data containing both benign and malicious activities across multiple attack categories. The dataset encompasses diverse cyber threats, including DDoS attacks, DoS attacks, spoofing techniques, reconnaissance activities, web-based attacks, Mirai botnet variants, and brute force attempts \cite{Neto2023a}. Each network flow is characterized by a rich set of 46 features spanning flow statistics, protocol flags, and packet counts.

Given the heterogeneous nature of network traffic features and the multi-class classification challenge in cybersecurity, a custom multi-domain kernel was designed to capture domain-specific similarities across statistical properties, protocol behaviors, traffic patterns, and geometric flow characteristics, providing a more nuanced representation for IoT network intrusion detection. A key strength of our RKHS-based approach lies in its flexibility to incorporate domain knowledge directly into the kernel desing. To assess performance, we trained both UMAP and our \ac{KE}  model using only benign traffic. The goal is to evaluate the generalization ability to unseen attacks without any exposure to malicious labels. As shown in Figure~\ref{fig:comparison_umap_ke}, UMAP (a) yields a fragmented latent space characterized by multiple dispersed branches, without establishing a centralized or coherent region for benign traffic. In contrast, the proposed \ac{KE} method (b) constructs a compact and well-structured manifold, where benign samples concentrate in a dominant region, and malicious samples are projected into distinct, more localized areas that remain well-separated from the benign core. Importantly, this structured organization emerges even though \ac{KE} was trained solely on benign traffic, underscoring its ability to generalize to novel threats by leveraging the inductive bias encoded in the kernel design and the autoreconstructive properties of the algorithm.

These numerical experiments support the effectiveness of the proposed autoreconstructive kernel embedding method across synthetic and real-world datasets. On tasks like concentric circles and Swiss roll, it outperforms Kernel PCA and UMAP in both clustering quality and embedding fidelity. These results highlight its capacity to preserve nonlinear structures under strong compression. In the molecular classification task, the method achieves competitive accuracy and ROC AUC while producing compact, informative embeddings. Overall, the approach offers a principled solution for structure-preserving dimensionality reduction, suitable for both visualization and downstream analysis.

\section{Discussion and Conclussion}
\label{sec:discussion_and_conclussion}

The proposed Autoreconstructive Kernel Embedding framework introduces a principled reconstruction-driven perspective for learning low-dimensional embeddings that retain the self-reconstructive geometry of high-dimensional data.  A key strength of the method lies in its modular two-step design, which decouples reconstruction from embedding in a principled approach based on the Representer Theorem and the autorepresentation property as novelties. This allows the preservation of nonlinear similarity structures while yielding embeddings that are robust and effective for downstream tasks such as clustering and classification. The method generalizes well across domains with structured, high-dimensional, and possibly sparse data provided that an appropriate similarity kernel is available.

Nevertheless, several limitations remain. First, the method lacks an explicit inverse mapping from the embedding space to the input domain, which complicates generative applications. Second, out-of-sample extension requires additional approximation schemes, as the embedding is defined only for training samples. Finally, although our formulation is unsupervised, incorporating supervision example, via label-informed reconstruction weights or alignment penalties-could enhance its applicability to discriminative tasks.

Future work will also address scalability to larger datasets through sparse kernel approximations and explore supervised extensions and online versions of the algorithm. We also envision applications in areas such as anomaly detection, drug discovery, and explainable AI.


\section*{Acknowledgments}
This work was supported by the CyberFold project, funded by the European Union through the NextGenerationEU instrument (Recovery, Transformation, and Resilience Plan), and managed by Instituto Nacional de Ciberseguridad de España (INCIBE), under reference number ETD202300129. Partially funded by the Autonomous Community of Madrid (ELLIS Madrid Node). Also partially supported by project PID2022-140786NB-C32 (LATENTIA) and PID2023-152331OA-I00 (HERMES) from the Spanish Ministry of Science and Innovation (AEI/10.13039/501100011033).

\bibliographystyle{unsrtnat}
\bibliography{mybibfile}

@misc{feito_casares_2025_16812806,
  author       = {Feito Casares, Enrique and
                  Melgarejo Meseguer, Francisco Manuel and
                  Rojo-Álvarez, José Luis},
  title        = {{Learning Reconstructive Embeddings in Reproducing
                   Kernel Hilbert Spaces via the Representer Theorem
                   (Supplementary Code)
                  }},
  month        = aug,
  year         = 2025,
  publisher    = {Zenodo},
  doi          = {10.5281/zenodo.16812805},
  url          = {https://doi.org/10.5281/zenodo.16812805},
}

@inproceedings{su2010multilabel,
  title = {Multilabel Classification of Drug-like Molecules via Max-Margin Conditional Random Fields},
  booktitle = {Proceedings of the Fifth European Workshop on Probabilistic Graphical Models ({{PGM}})},
  author = {Su, Hongyu and Heinonen, Markus and Rousu, Juho},
  year = {2010}
}

@incollection{Venna2001,
  title = {Neighborhood {{Preservation}} in {{Nonlinear Projection Methods}}: {{An Experimental Study}}},
  shorttitle = {Neighborhood {{Preservation}} in {{Nonlinear Projection Methods}}},
  booktitle = {Artificial {{Neural Networks}} --- {{ICANN}} 2001},
  author = {Venna, Jarkko and Kaski, Samuel},
  editor = {Goos, Gerhard and Hartmanis, Juris and Van Leeuwen, Jan and Dorffner, Georg and Bischof, Horst and Hornik, Kurt},
  year = {2001},
  volume = {2130},
  pages = {485--491},
  publisher = {Springer Berlin Heidelberg},
  address = {Berlin, Heidelberg},
  doi = {10.1007/3-540-44668-0_68},
  urldate = {2025-05-06},
  isbn = {978-3-540-42486-4 978-3-540-44668-2}
}

@article{Calinski1974,
  title = {A Dendrite Method for Cluster Analysis},
  author = {Calinski, T. and Harabasz, J.},
  year = {1974},
  journal = {Communications in Statistics - Theory and Methods},
  volume = {3},
  number = {1},
  pages = {1--27},
  issn = {0361-0926},
  doi = {10.1080/03610927408827101},
  urldate = {2025-05-05},
  langid = {english}
}

@article{Davies1979,
  title = {A {{Cluster Separation Measure}}},
  author = {Davies, David L. and Bouldin, Donald W.},
  year = {1979},
  month = apr,
  journal = {IEEE Transactions on Pattern Analysis and Machine Intelligence},
  volume = {PAMI-1},
  number = {2},
  pages = {224--227},
  issn = {0162-8828, 2160-9292},
  doi = {10.1109/TPAMI.1979.4766909},
  urldate = {2025-05-05},
  copyright = {https://ieeexplore.ieee.org/Xplorehelp/downloads/license-information/IEEE.html}
}

@article{Ali2019a,
  title = {{{TimeCluster}}: Dimension Reduction Applied to Temporal Data for Visual Analytics},
  shorttitle = {{{TimeCluster}}},
  author = {Ali, Mohammed and Jones, Mark W. and Xie, Xianghua and Williams, Mark},
  year = {2019},
  month = jun,
  journal = {The Visual Computer},
  volume = {35},
  number = {6-8},
  pages = {1013--1026},
  issn = {0178-2789, 1432-2315},
  doi = {10.1007/s00371-019-01673-y},
  urldate = {2024-11-12},
  langid = {english}
}

@article{Bengio2013,
  title = {Representation {{Learning}}: {{A Review}} and {{New Perspectives}}},
  shorttitle = {Representation {{Learning}}},
  author = {Bengio, Y. and Courville, A. and Vincent, P.},
  year = {2013},
  month = aug,
  journal = {IEEE Transactions on Pattern Analysis and Machine Intelligence},
  volume = {35},
  number = {8},
  pages = {1798--1828},
  issn = {0162-8828, 2160-9292},
  doi = {10.1109/TPAMI.2013.50},
  urldate = {2025-04-16},
  copyright = {https://ieeexplore.ieee.org/Xplorehelp/downloads/license-information/IEEE.html}
}

@inproceedings{Bodaghi2024,
  title = {A {{Multimodal Intermediate Fusion Network}} with {{Manifold Learning}} for {{Stress Detection}}},
  booktitle = {2024 {{IEEE}} 3rd {{International Conference}} on {{Computing}} and {{Machine Intelligence}} ({{ICMI}})},
  author = {Bodaghi, Morteza and Hosseini, Majid and Gottumukkala, Raju},
  year = {2024},
  month = apr,
  pages = {1--8},
  publisher = {IEEE},
  address = {Mt Pleasant, MI, USA},
  doi = {10.1109/ICMI60790.2024.10586177},
  urldate = {2025-04-18},
  copyright = {https://doi.org/10.15223/policy-029},
  isbn = {9798350372977}
}

@article{Entezami2025,
  title = {Early Warning of Structural Damage via Manifold Learning-Aided Data Clustering and Non-Parametric Probabilistic Anomaly Detection},
  author = {Entezami, Alireza and Sarmadi, Hassan and Behkamal, Bahareh and Mariani, Stefano},
  year = {2025},
  month = feb,
  journal = {Mechanical Systems and Signal Processing},
  volume = {224},
  pages = {111984},
  issn = {08883270},
  doi = {10.1016/j.ymssp.2024.111984},
  urldate = {2025-04-18},
  langid = {english}
}

@article{Fefferman2016,
  title = {Testing the Manifold Hypothesis},
  author = {Fefferman, Charles and Mitter, Sanjoy and Narayanan, Hariharan},
  year = {2016},
  month = feb,
  journal = {Journal of the American Mathematical Society},
  volume = {29},
  number = {4},
  pages = {983--1049},
  issn = {0894-0347, 1088-6834},
  doi = {10.1090/jams/852},
  urldate = {2025-04-21},
  langid = {english}
}

@article{Feijoo-Martinez2023,
  title = {Cybersecurity {{Alert Prioritization}} in a {{Critical High Power Grid With Latent Spaces}}},
  author = {{Feijoo-Mart{\'i}nez}, Juan Ram{\'o}n and {Guerrero-Curieses}, Alicia and {Gimeno-Blanes}, Francisco and {Castro-Fern{\'a}ndez}, Mario and {Rojo-{\'A}lvarez}, Jos{\'e} Luis},
  year = {2023},
  journal = {IEEE Access},
  volume = {11},
  pages = {23754--23770},
  issn = {2169-3536},
  doi = {10.1109/ACCESS.2023.3255101},
  urldate = {2025-04-18},
  copyright = {https://creativecommons.org/licenses/by-nc-nd/4.0/}
}

@inproceedings{Fukumizu2003,
  title = {Kernel Dimensionality Reduction for Supervised Learning},
  booktitle = {Advances in Neural Information Processing Systems},
  author = {Fukumizu, Kenji and Bach, Francis and Jordan, Michael},
  editor = {Thrun, S. and Saul, L. and Sch{\"o}lkopf, B.},
  year = {2003},
  volume = {16},
  publisher = {MIT Press}
}

@book{Ghojogh2023a,
  title = {Elements of {{Dimensionality Reduction}} and {{Manifold Learning}}},
  author = {Ghojogh, Benyamin and Crowley, Mark and Karray, Fakhri and Ghodsi, Ali},
  year = {2023},
  publisher = {Springer International Publishing},
  address = {Cham},
  doi = {10.1007/978-3-031-10602-6},
  urldate = {2025-04-22},
  copyright = {https://www.springernature.com/gp/researchers/text-and-data-mining},
  isbn = {978-3-031-10601-9 978-3-031-10602-6},
  langid = {english}
}

@inproceedings{Gholami2016a,
  title = {Kernel Auto-Encoder for Semi-Supervised Hashing},
  booktitle = {2016 {{IEEE Winter Conference}} on {{Applications}} of {{Computer Vision}} ({{WACV}})},
  author = {Gholami, Behnam and Hajisami, Abolfazl},
  year = {2016},
  month = mar,
  pages = {1--8},
  publisher = {IEEE},
  address = {Lake Placid, NY, USA},
  doi = {10.1109/WACV.2016.7477690},
  urldate = {2025-04-23},
  isbn = {978-1-5090-0641-0}
}

@book{Hastie2009,
  title = {The {{Elements}} of {{Statistical Learning}}},
  author = {Hastie, Trevor and Tibshirani, Robert and Friedman, Jerome},
  year = {2009},
  series = {Springer {{Series}} in {{Statistics}}},
  publisher = {Springer New York},
  address = {New York, NY},
  doi = {10.1007/978-0-387-84858-7},
  urldate = {2025-04-23},
  copyright = {http://www.springer.com/tdm},
  isbn = {978-0-387-84857-0 978-0-387-84858-7}
}

@inproceedings{Hinton2002,
  title = {Stochastic Neighbor Embedding},
  booktitle = {Advances in Neural Information Processing Systems},
  author = {Hinton, Geoffrey E and Roweis, Sam},
  editor = {Becker, S. and Thrun, S. and Obermayer, K.},
  year = {2002},
  volume = {15},
  publisher = {MIT Press}
}

@book{Jolliffe1986,
  title = {Principal {{Component Analysis}}},
  author = {Jolliffe, I. T.},
  year = {1986},
  series = {Springer {{Series}} in {{Statistics}}},
  publisher = {Springer New York},
  address = {New York, NY},
  doi = {10.1007/978-1-4757-1904-8},
  urldate = {2025-04-23},
  copyright = {http://www.springer.com/tdm},
  isbn = {978-1-4757-1906-2 978-1-4757-1904-8}
}

@inproceedings{Laforgue2018,
  title = {Autoencoding Any Data through Kernel Autoencoders},
  booktitle = {International Conference on Artificial Intelligence and Statistics},
  author = {Laforgue, Pierre and Cl{\'e}men{\c c}on, St{\'e}phan and {d'Alch{\'e}-Buc}, Florence},
  year = {2018},
  doi = {10.48550/arXiv.1805.11028}
}

@inproceedings{Laforgue2019a,
  title = {Duality in Rkhss with Infinite Dimensional Outputs: {{Application}} to Robust Losses},
  booktitle = {International Conference on Machine Learning},
  author = {Laforgue, Pierre and Lambert, Alex and {Brogat-Motte}, Luc and {d'Alch'e-Buc}, Florence},
  year = {2019},
  doi = {10.48550/arXiv.1910.04621}
}

@article{Li1991,
  title = {Sliced {{Inverse Regression}} for {{Dimension Reduction}}},
  author = {Li, Ker-Chau},
  year = {1991},
  month = jun,
  journal = {Journal of the American Statistical Association},
  volume = {86},
  number = {414},
  pages = {316--327},
  issn = {0162-1459, 1537-274X},
  doi = {10.1080/01621459.1991.10475035},
  urldate = {2025-04-23},
  langid = {english}
}

@article{Li2024,
  title = {Mugen-{{UMAP}}: {{UMAP}} Visualization and Clustering of Mutated Genes in Single-Cell {{DNA}} Sequencing Data},
  shorttitle = {Mugen-{{UMAP}}},
  author = {Li, Teng and Zou, Yiran and Li, Xianghan and Wong, Thomas K. F. and Rodrigo, Allen G.},
  year = {2024},
  month = sep,
  journal = {BMC Bioinformatics},
  volume = {25},
  number = {1},
  pages = {308},
  issn = {1471-2105},
  doi = {10.1186/s12859-024-05928-x},
  urldate = {2025-04-18},
  abstract = {Abstract                            Background               The application of Uniform Manifold Approximation and Projection (UMAP) for dimensionality reduction and visualization has revolutionized the analysis of single-cell RNA expression and population genetics. However, its potential in single-cell DNA sequencing data analysis, particularly for visualizing gene mutation information, has not been fully explored.                                         Results               We introduce Mugen-UMAP, a novel Python-based program that extends UMAP's utility to single-cell DNA sequencing data. This innovative tool provides a comprehensive pipeline for processing gene annotation files of single-cell somatic single-nucleotide variants and metadata to the visualization of UMAP projections for identifying clusters, along with various statistical analyses. Employing Mugen-UMAP, we analyzed whole-exome sequencing data from 365 single-cell samples across 12 non-small cell lung cancer (NSCLC) patients, revealing distinct clusters associated with histological subtypes of NSCLC. Moreover, to demonstrate the general utility of Mugen-UMAP, we applied the program to 9 additional single-cell WES datasets from various cancer types, uncovering interesting patterns of cell clusters that warrant further investigation. In summary, Mugen-UMAP provides a quick and effective visualization method to uncover cell cluster patterns based on the gene mutation information from single-cell DNA sequencing data.                                         Conclusions                                The application of Mugen-UMAP demonstrates its capacity to provide valuable insights into the visualization and interpretation of single-cell DNA sequencing data. Mugen-UMAP can be found at                 https://github.com/tengchn/Mugen-UMAP},
  langid = {english}
}

@article{Ma2024,
  title = {Efficient Aerodynamic Shape Optimization by Using Unsupervised Manifold Learning to Filter Geometric Features},
  author = {Ma, Long and Wu, Xiao-Jing and Zhang, Wei-Wei},
  year = {2024},
  month = dec,
  journal = {Engineering Applications of Computational Fluid Mechanics},
  volume = {18},
  number = {1},
  pages = {2384465},
  issn = {1994-2060, 1997-003X},
  doi = {10.1080/19942060.2024.2384465},
  urldate = {2025-04-18},
  langid = {english}
}

@misc{McInnes2018a,
  title = {{{UMAP}}: {{Uniform Manifold Approximation}} and {{Projection}} for {{Dimension Reduction}}},
  shorttitle = {{{UMAP}}},
  author = {McInnes, Leland and Healy, John and Melville, James},
  year = {2018},
  publisher = {arXiv},
  doi = {10.48550/ARXIV.1802.03426},
  urldate = {2025-04-18},
  abstract = {UMAP (Uniform Manifold Approximation and Projection) is a novel manifold learning technique for dimension reduction. UMAP is constructed from a theoretical framework based in Riemannian geometry and algebraic topology. The result is a practical scalable algorithm that applies to real world data. The UMAP algorithm is competitive with t-SNE for visualization quality, and arguably preserves more of the global structure with superior run time performance. Furthermore, UMAP has no computational restrictions on embedding dimension, making it viable as a general purpose dimension reduction technique for machine learning.},
  copyright = {arXiv.org perpetual, non-exclusive license},
  keywords = {Computational Geometry (cs.CG),FOS: Computer and information sciences,Machine Learning (cs.LG),Machine Learning (stat.ML)}
}

@article{Melgarejo-Meseguer2024a,
  title = {Anomaly {{Detection From Low-Dimensional Latent Manifolds With Home Environmental Sensors}}},
  author = {{Melgarejo-Meseguer}, Francisco M. and {Lorenzo-Bleda}, Andr{\'e}s and {Eduardo-Abbenante}, Sergio and {Gimeno-Blanes}, Francisco-Javier and {Everss-Villalba}, Estrella and {Mu{\~n}oz-Romero}, Sergio and {Rojo-{\'A}lvarez}, Jos{\'e}-Luis and Ferriz, Rafael Maestre},
  year = {2024},
  month = apr,
  journal = {IEEE Internet of Things Journal},
  volume = {11},
  number = {8},
  pages = {13411--13423},
  issn = {2327-4662, 2372-2541},
  doi = {10.1109/JIOT.2023.3338697},
  urldate = {2025-04-18},
  copyright = {https://ieeexplore.ieee.org/Xplorehelp/downloads/license-information/IEEE.html}
}

@article{Micchelli2005,
  title = {On {{Learning Vector-Valued Functions}}},
  author = {Micchelli, Charles A. and Pontil, Massimiliano},
  year = {2005},
  month = jan,
  journal = {Neural Computation},
  volume = {17},
  number = {1},
  pages = {177--204},
  issn = {0899-7667, 1530-888X},
  doi = {10.1162/0899766052530802},
  urldate = {2025-04-23},
  abstract = {In this letter, we provide a study of learning in a Hilbert space of vector-valued functions. We motivate the need for extending learning theory of scalar-valued functions by practical considerations and establish some basic results for learning vector-valued functions that should prove useful in applications. Specifically, we allow an output space Y to be a Hilbert space, and we consider a reproducing kernel Hilbert space of functions whose values lie in Y. In this setting, we derive the form of the minimal norm interpolant to a finite set of data and apply it to study some regularization functionals that are important in learning theory. We consider specific examples of such functionals corresponding to multiple-output regularization networks and support vector machines, for both regression and classification. Finally, we provide classes of operator-valued kernels of the dot product and translation-invariant type.},
  langid = {english}
}

@inproceedings{Mika1999,
  title = {Fisher Discriminant Analysis with Kernels},
  booktitle = {Neural {{Networks}} for {{Signal Processing IX}}: {{Proceedings}} of the 1999 {{IEEE Signal Processing Society Workshop}} ({{Cat}}. {{No}}.{{98TH8468}})},
  author = {Mika, S. and Ratsch, G. and Weston, J. and Scholkopf, B. and Mullers, K.R.},
  year = {1999},
  pages = {41--48},
  publisher = {IEEE},
  address = {Madison, WI, USA},
  doi = {10.1109/NNSP.1999.788121},
  urldate = {2025-04-23},
  isbn = {978-0-7803-5673-3}
}

@inproceedings{Wang2010,
  title = {Unsupervised Kernel Dimension Reduction},
  booktitle = {Advances in Neural Information Processing Systems},
  author = {Wang, Meihong and Sha, Fei and Jordan, Michael},
  editor = {Lafferty, J. and Williams, C. and Shawe-Taylor, J. and Zemel, R. and Culotta, A.},
  year = {2010},
  volume = {23},
  publisher = {Curran Associates, Inc.},
}

@book{Rojo-Alvarez2018c,
  title = {Digital {{Signal Processing}} with {{Kernel Methods}}},
  author = {Rojo-{\'A}lvarez, Jos{\'e} Luis and Mart{\'i}nez-Ram{\'o}n, Manel and Mu{\~n}oz-Mar{\'i}, Jordi and Camps-Valls, Gustau},
  year = {2018},
  month = jan,
  edition = {1},
  publisher = {Wiley},
  doi = {10.1002/9781118705810},
  urldate = {2025-04-21},
  copyright = {http://doi.wiley.com/10.1002/tdm\_license\_1.1},
  isbn = {978-1-118-61179-1 978-1-118-70581-0},
  langid = {english}
}

@inproceedings{Roweis1997,
  title = {{{EM}} Algorithms for {{PCA}} and {{SPCA}}},
  booktitle = {Advances in Neural Information Processing Systems},
  author = {Roweis, Sam},
  editor = {Jordan, M. and Kearns, M. and Solla, S.},
  year = {1997},
  volume = {10},
  publisher = {MIT Press}
}

@article{Roweis2000a,
  title = {Nonlinear {{Dimensionality Reduction}} by {{Locally Linear Embedding}}},
  author = {Roweis, Sam T. and Saul, Lawrence K.},
  year = {2000},
  month = dec,
  journal = {Science},
  volume = {290},
  number = {5500},
  pages = {2323--2326},
  issn = {0036-8075, 1095-9203},
  doi = {10.1126/science.290.5500.2323},
  urldate = {2025-04-23},
  abstract = {Many areas of science depend on exploratory data analysis and visualization. The need to analyze large amounts of multivariate data raises the fundamental problem of dimensionality reduction: how to discover compact representations of high-dimensional data. Here, we introduce locally linear embedding (LLE), an unsupervised learning algorithm that computes low-dimensional, neighborhood-preserving embeddings of high-dimensional inputs. Unlike clustering methods for local dimensionality reduction, LLE maps its inputs into a single global coordinate system of lower dimensionality, and its optimizations do not involve local minima. By exploiting the local symmetries of linear reconstructions, LLE is able to learn the global structure of nonlinear manifolds, such as those generated by images of faces or documents of text.},
  langid = {english}
}

@article{Sanchez-Carballo2025,
  title = {Interpretable Manifold Learning for {{T-wave}} Alternans Assessment with Electrocardiographic Imaging},
  author = {{S{\'a}nchez-Carballo}, E. and {Melgarejo-Meseguer}, F.M. and Vijayakumar, R. and {S{\'a}nchez-Mu{\~n}oz}, J.J. and {Garc{\'i}a-Alberola}, A. and Rudy, Y. and {Rojo-{\'A}lvarez}, J.L.},
  year = {2025},
  month = mar,
  journal = {Engineering Applications of Artificial Intelligence},
  volume = {143},
  pages = {109996},
  issn = {09521976},
  doi = {10.1016/j.engappai.2024.109996},
  urldate = {2025-04-18},
  langid = {english}
}

@incollection{Scholkopf1997,
  title = {Kernel Principal Component Analysis},
  booktitle = {Artificial {{Neural Networks}} --- {{ICANN}}'97},
  author = {Sch{\"o}lkopf, Bernhard and Smola, Alexander and M{\"u}ller, Klaus-Robert},
  editor = {Goos, Gerhard and Hartmanis, Juris and Van Leeuwen, Jan and Gerstner, Wulfram and Germond, Alain and Hasler, Martin and Nicoud, Jean-Daniel},
  year = {1997},
  volume = {1327},
  pages = {583--588},
  publisher = {Springer Berlin Heidelberg},
  address = {Berlin, Heidelberg},
  doi = {10.1007/BFb0020217},
  urldate = {2025-04-23},
  isbn = {978-3-540-63631-1 978-3-540-69620-9}
}

@book{Scholkopf2002,
  title = {Learning with Kernels: Support Vector Machines, Regularization, Optimization, and Beyond},
  shorttitle = {Learning with Kernels},
  author = {Sch{\"o}lkopf, Bernhard and Smola, Alexander Johannes},
  year = {2002},
  series = {Adaptive Computation and Machine Learning Series},
  publisher = {MIT Press},
  address = {Cambridge, Mass.},
  isbn = {978-0-262-53657-8 978-0-262-19475-4},
  langid = {english}
}

@article{Senkene1973,
  title = {Hilbert Spaces of Operator-Valued Functions},
  author = {Senkene, {\'E}. and Tempel'man, A.},
  year = {1973},
  month = oct,
  journal = {Mathematical Transactions of the Academy of Sciences of the Lithuanian SSR},
  volume = {13},
  number = {4},
  pages = {665--670},
  issn = {0363-1672, 1573-8825},
  doi = {10.1007/BF01630739},
  urldate = {2025-04-23},
  copyright = {http://www.springer.com/tdm},
  langid = {english}
}

@article{vanderMaaten2008,
  title = {Visualizing Data Using T-{{SNE}}},
  author = {{van der Maaten}, Laurens and Hinton, Geoffrey},
  year = {2008},
  journal = {Journal of Machine Learning Research},
  volume = {9},
  number = {86},
  pages = {2579--2605}
}

@article{Zhao2012a,
  title = {Facial Expression Recognition Using Local Binary Patterns and Discriminant Kernel Locally Linear Embedding},
  author = {Zhao, Xiaoming and Zhang, Shiqing},
  year = {2012},
  month = dec,
  journal = {EURASIP Journal on Advances in Signal Processing},
  volume = {2012},
  number = {1},
  pages = {20},
  issn = {1687-6180},
  doi = {10.1186/1687-6180-2012-20},
  urldate = {2025-04-23},
  copyright = {http://creativecommons.org/licenses/by/2.0},
  langid = {english}
}

@article{Neto2023a,
  title = {{{CICIoT2023}}: {{A Real-Time Dataset}} and {{Benchmark}} for {{Large-Scale Attacks}} in {{IoT Environment}}},
  shorttitle = {{{CICIoT2023}}},
  author = {Neto, Euclides Carlos Pinto and Dadkhah, Sajjad and Ferreira, Raphael and Zohourian, Alireza and Lu, Rongxing and Ghorbani, Ali A.},
  year = {2023},
  journal = {Sensors},
  shortjournal = {Sensors},
  volume = {23},
  number = {13},
  pages = {5941},
  issn = {1424-8220},
  doi = {10.3390/s23135941},
  url = {https://www.mdpi.com/1424-8220/23/13/5941},
  urldate = {2025-08-12},
  abstract = {Nowadays, the Internet of Things (IoT) concept plays a pivotal role in society and brings new capabilities to different industries. The number of IoT solutions in areas such as transportation and healthcare is increasing and new services are under development. In the last decade, society has experienced a drastic increase in IoT connections. In fact, IoT connections will increase in the next few years across different areas. Conversely, several challenges still need to be faced to enable efficient and secure operations (e.g., interoperability, security, and standards). Furthermore, although efforts have been made to produce datasets composed of attacks against IoT devices, several possible attacks are not considered. Most existing efforts do not consider an extensive network topology with real IoT devices. The main goal of this research is to propose a novel and extensive IoT attack dataset to foster the development of security analytics applications in real IoT operations. To accomplish this, 33 attacks are executed in an IoT topology composed of 105 devices. These attacks are classified into seven categories, namely DDoS, DoS, Recon, Web-based, brute force, spoofing, and Mirai. Finally, all attacks are executed by malicious IoT devices targeting other IoT devices. The dataset is available on the CIC Dataset website.},
  langid = {english}
}

\end{document}